\documentclass[sigconf]{acmart}

\AtBeginDocument{%
  \providecommand\BibTeX{{%
    \normalfont B\kern-0.5em{\scshape i\kern-0.25em b}\kern-0.8em\TeX}}}

\setcopyright{acmcopyright}
\copyrightyear{2018}
\acmYear{2018}
\acmDOI{XXXXXXX.XXXXXXX}

\acmConference[Conference acronym 'XX]{Make sure to enter the correct
  conference title from your rights confirmation emai}{June 03--05,
  2018}{Woodstock, NY}
%
%
\acmPrice{15.00}
\acmISBN{978-1-4503-XXXX-X/18/06}

\usepackage{graphicx}
\usepackage{xcolor}    
\usepackage{amsmath,array}
\usepackage{amsthm}
\usepackage{colortbl}
\usepackage{threeparttable}
\usepackage{algorithm}
\usepackage{algorithmic}
\usepackage{multirow}
\usepackage{amsfonts}
\usepackage{caption}
\usepackage{bbold}
\usepackage{color}
\usepackage{mathtools}
\usepackage{makecell}
\usepackage{booktabs}

\usepackage{enumitem}
\usepackage{makecell}

\newtheorem{corollary}{Corollary}
\newlist{Properties}{enumerate}{2}
\setlist[Properties]{label=Prop. \arabic*., font=\textit, itemindent=*}

\newtheorem{remark}{Remark}




\begin{document}

\title[Cross-Space Filter]{Cross-Space Adaptive Filter: Integrating Graph Topology and  \\ Node Attributes for Alleviating the Over-smoothing Problem}



\author{Chen Huang}
\affiliation{%
  \institution{Sichuan University}
  \city{Chengdu}
  \country{China}
}
\email{huangc.scu@gmail.com}

\author{Haoyang Li}
\affiliation{%
  \institution{Tsinghua University}
  \city{Beijing}
  \country{China}
}
\email{lihy218@gmail.com}

\author{Yifan Zhang}
\affiliation{%
  \institution{Vanderbilt University}
  \city{Nashville, TN}
  \country{USA}
}
\email{yifan.zhang.2@vanderbilt.edu}

\author{Wenqiang Lei}
\authornote{Wenqiang Lei is the corresponding author.}
\affiliation{%
  \institution{Sichuan University}
  \city{Chengdu}
  \country{China}
}
\email{wenqianglei@gmail.com}

\author{Jiancheng Lv}
\affiliation{%
  \institution{Sichuan University}
  \city{Chengdu}
  \country{China}
}

\renewcommand{\shortauthors}{Chen et al.}

\begin{abstract}
The vanilla Graph Convolutional Network (GCN) uses a low-pass filter to extract low-frequency signals from graph topology, which may lead to the over-smoothing problem when GCN goes deep. To this end, various methods have been proposed to create an adaptive filter by incorporating an extra filter (e.g., a high-pass filter) extracted from the graph topology. However, these methods heavily rely on topological information and ignore the node attribute space, which severely sacrifices the expressive power of the deep GCNs, especially when dealing with disassortative graphs.
In this paper, we propose a cross-space adaptive filter, called CSF, to produce the adaptive-frequency information extracted from both the topology and attribute spaces. 
Specifically, we first derive a tailored attribute-based high-pass filter that can be interpreted theoretically as a minimizer for semi-supervised kernel ridge regression.
Then, we cast the topology-based low-pass filter as a Mercer’s kernel within the context of GCNs. This serves as a foundation for combining it with the attribute-based filter to capture the adaptive-frequency information.
Finally, we derive the cross-space filter via an effective multiple-kernel learning strategy, which unifies the attribute-based high-pass filter and the topology-based low-pass filter. This helps to address the over-smoothing problem while maintaining effectiveness.
Extensive experiments demonstrate that CSF not only successfully alleviates the over-smoothing problem but also promotes the effectiveness of the node classification task. Our code is available at \url{https://github.com/huangzichun/Cross-Space-Adaptive-Filter}.
\end{abstract}

\begin{CCSXML}
<ccs2012>
   <concept>
       <concept_id>10010147.10010257.10010293.10010294</concept_id>
       <concept_desc>Computing methodologies~Neural networks</concept_desc>
       <concept_significance>300</concept_significance>
       </concept>
 </ccs2012>
\end{CCSXML}

\ccsdesc[300]{Computing methodologies~Neural networks}

\keywords{Graph convolutional network, over-smoothing, node attribute}


\maketitle
\textbf{Relevance to The Web Conference}. 
This work studies the issue of over-smoothing in GNNs. As a result, it meets the requirements of \textit{Graph Algorithms and Modeling for the Web} Track and is closely related to the topic of "\textit{Graph embeddings and GNNs for the Web}".

\vspace{-2mm}

\section{Introduction}
Graph Neural Networks (GNNs) have proven to be effective in learning representations of network-structured data and have achieved great success in various real-world web applications, such as citation networks~\cite{sen2008collective, kipf2016semi} and actor co-occurrence network~\cite{tang2009social}. 
As one of the mainstream research lines of GNNs, spectral-based methods have attracted much attention due to their strong mathematical foundation. 
Typically, these methods build upon graph signal processing and define the convolution operation using the graph Fourier transform.
Recent studies suggest that superior performance can be achieved when jointly characterizing graph topology and node attributes \cite{you2020design,yang2020relation}.
This is because node attributes contain rich information, such as the correlation among node attributes, which complements the information from graph topology.

As the seminal work of spectral-based GNNs, the Graph Convolutional Network (GCN) has been widely explored.
Informally, GCN is a multi-layer feed-forward neural network that propagates node representations across an undirected graph. 
During the convolution operation, each node updates its representation by aggregating representations from its connected neighborhood. 
Although the effectiveness of GCN, it, unfortunately, suffers from the \textbf{over-smoothing problem} \cite{rusch2023survey, zhu2020simple}, where node representations become indistinguishable and converge towards the same constant value as the number of layers increases. 
This is because the convolution operation in a GCN layer is governed by a low-pass spectral filter, which causes connected nodes in the graph to share similar representations \cite{wu2019simplifying, balcilar2021analyzing}.
Previous studies show that this low-pass filter corresponds to the eigensystem of the graph Laplacian and penalizes large eigenvalues in its eigen-expansion. 
This helps to remove un-smooth signals from the graph and ensures that connected nodes tend to share similar representations \cite{shuman2013emerging}. However, as the number of layers increases, the over-smoothing problem occurs.
To address this issue, researchers have been striving to move beyond the low-pass filter and create an adaptive spectral filter \cite{10.1145/3459637.3482226}. This is achieved by learning an additional matrix-valued function on the eigenvalues of the graph Laplacian that produces an all-pass \cite{ju2022kernel} or high-pass filter \cite{bo2021beyond, fu2021p}. These new filters are then integrated with the low-pass filter to yield the adaptive one. By this means, the adaptive filter incorporates adaptive frequency information rather than relying exclusively on low-pass frequency information.

However, existing adaptive filters only focus on the graph topology space but largely ignore the node attribute space, which severely sacrifices the expressive power of the deep GCNs, especially when the node's label is primarily determined by its own attributes rather than the topology.
For example, when it comes to proteins, different types of amino acids often interact chemically with each other. Similarly, in actor co-occurrence networks,  actor collaboration often occurs among different types of actors.
In this case, instead of relying solely on the graph topology to determine which nodes should not have similar representations, considering the correlation between node attributes can provide valuable prior knowledge on the dissimilarity between nodes. 
When ignoring the node attributes, previous studies show that the correlations of the learned node representations are potentially inconsistent with those of the raw node attributes \cite{zhao2019pairnorm, 10.1145/3485447.3512069},  
In this case, the original node attributes are washed away, which leads to decreased performance \cite{jin2021node}.
This is particularly true when dealing with disassortative graphs \cite{zhu2020beyond} where neighboring nodes have dissimilar attributes or labels. 
Therefore, it is important to integrate both graph topology and node attribute spaces to alleviate the over-smoothing problem while at the same time promoting the effectiveness of downstream tasks.
However, despite the recognition of its importance \cite{rusch2023survey}, further research efforts are required to fully realize it.


To this end, we propose a novel Cross-Space Filter (CSF, for short), an adaptive filter that integrates the adaptive-frequency information across both topology and node attribute spaces.
In addition to the conventional low-pass filter extracted from the graph topology space, we first leverage the correlations of node attributes to extract a high-pass filter.
Unlike other high-pass filters that are usually designed arbitrarily without model interpretation \cite{bo2021beyond, ju2022kernel}, our high-pass filter, arising from a Mercer's kernel, is interpreted as a minimizer for semi-supervised kernel ridge regression. 
This brings more model transparency for humans to understand what knowledge the GCN extracts to make the specific filter.
Then, to tackle the challenge of merging information from two separate spaces, we resort to the graph kernel theory \cite{smola2003kernels} to cast the conventional low-pass filter of GCN into a kernel, unifying the two filters in Reproducing Kernel Hilbert Space (RKHS).
This allows us to utilize the benefits of two spaces simultaneously. 
Subsequently, the proposed adaptive filter CSF is obtained by applying a simple multiple-kernel learning technique to fuse the information in both the topology and attribute spaces.
As such, we successfully take advantage of both graph topology and node attribute spaces from the perspective of Mercer's kernel. 
Consequently, our CSF alleviates the over-smoothing problem while at the same time promoting the effectiveness of deep GCNs, especially on disassortative graphs. 
This provides insight into revisiting the role of node attributes and kernels in alleviating the over-smoothing problem. 

To evaluate the effectiveness of CSF, we conduct comparative experiments with various baselines under different numbers of convolution layers. 
These experiments are conducted on both assortative and disassortative graphs to verify the superiority of CSF on different types of graphs.
Compared to baselines, the results demonstrate that CSF not only successfully alleviates the over-smoothing problem by extracting and integrating information from both spaces but also improves the model performance on the downstream classification tasks, especially when dealing with disassortative graphs. In particular, on average, our CSF outperforms the best baseline by $+0.62$ on assortative graphs and $+10.39$ on disassortative graphs.
These results highlight the importance of node attributes in assisting over-smoothing alleviation while promoting the effectiveness of deep GCNs. 
We claim the following contributions.  
\vspace{-1mm}
\begin{itemize}[leftmargin=*]
    \item We call attention to the importance of node attribute space, which helps alleviate the over-smoothing problem while at the same time promoting the effectiveness of deep GCN, especially on disassortative graphs.
    \item For the first time, we leverage the correlations of node attributes to extract a spectral high-pass filter, arising from a Mercer's kernel. Such a filter could be further interpreted as a minimizer of semi-supervised kernel ridge regression, which brings more model transparency.
    \item We take the first step to derive a cross-space adaptive filter, which integrates the adaptive-frequency information across both topology and node attribute spaces. This provides insight into revisiting the role of node attributes and kernels in alleviating the over-smoothing problem.
    \item Extensive experiments on various datasets indicate that our method outperforms others in terms of its robustness to the over-smoothing problem and effectiveness on the downstream tasks, especially on the disassortative graphs.
    
\end{itemize}

\vspace{-3mm}
\section{Related Work}

\textbf{Graph Filters and GCN.} 
The fitted values/representations from GCN and its variants are reduced by a low-pass filter, corresponding to the eigensystem of graph Laplacian \cite{wu2019simplifying,balcilar2021analyzing}, which results in the over-smoothing problem \cite{zhu2020simple}. 
As a result, various adaptive filter-based methods have been proposed to extend the low-pass filter in GCN. 
Specifically, they focus on the combinations of multiple low-pass filters \cite{gao2021message, 10.1007/s10489-022-03836-2}, all-pass and low-pass filters \cite{ju2022kernel}, low-pass and band-pass filters \cite{min2020scattering,zhu2020beyond}, and low-pass and high-pass filters \cite{zhu2020simple,fu2021p,bo2021beyond}. In this paper,
our method also combines low-pass and high-pass filters to eliminate the over-smoothing problem, but it differs from existing methods. 
For example, PGNN \cite{fu2021p} designs a new propagation rule based on $p$-Laplacian message passing that works as low-high-pass filters. 
More recently, FAGCN \cite{bo2021beyond} proposes a self-gating mechanism to achieve dynamic adaptation between low-pass and high-pass filters. Though considered in the model, the high-pass filter is based on a hand-crafted function, which is designed too arbitrarily without any interpretation. 
More importantly, existing works place heavy reliance on the graph topology to derive their adaptive filters, while the correlation information contained in the node's attributes is largely ignored. 
This may severely sacrifice the expressive power of the deep GCNs. This is particularly true when dealing with the disassortative graphs, where the node’s label is primarily determined by its own attributes rather than the topology.
To this end, we take the first step to derive a cross-space adaptive filter, which integrates information from both the topology and attribute spaces.

\textbf{Attribute-enhanced GCN.} 
Recent work shows that, together with the node attributes, superior performance can be achieved by characterizing graph topology and attribute correlations simultaneously \cite{you2020design,yang2020relation}, due to node attributes containing abundant information that complements the graph topology. 
The correlations could be built using the predict-then-propagate architecture \cite{gasteiger2018predict} or mutual exclusion constraints \cite{10.1145/3485447.3512069}. 
However, to the best of our knowledge, little research has attempted to use node attributes to address the over-smoothing problem. Although there have been some suggestions of adding residual connections to deep GNNs \cite{rusch2023survey}, such as the jumping knowledge \cite{xu2018representation}, that is helpful for the over-smoothness problem, we differ from these methods in completely different technical solutions: we leverage the correlations of node attributes to extract a spectral high-pass filter for the first, and we experimentally show our superiority.

\section{Preliminary}
\label{prelimiary}
\textbf{Notation.} Consider an undirected graph $G=(V,E,X)$ with adjacency matrix $A$, edge set $E$, node set $V$ with $|V|=N$. Also, the graph $G$ contains a node attribute matrix $X \in R^{N \times M}$, where each node $v_i \in V$ has an attribute vector $x_i \in R^{M}$. 
Moreover, we denote $\tilde{A}=A+I$ to be the adjacency matrix of graph $G$ with additional self-connections. $\tilde{D}$ and $D$ are defined as the diagonal degree matrix of $\tilde{A}$ and $A$ respectively, with $\tilde{D}_{ii}=\sum_j \tilde{A}_{ij}$ and $D_{ii}=\sum_j A_{ij}$. $L$ and $\tilde{L}$ are the normalized Laplacian matrix of $A$ and $\tilde{A}$, respectively. Also, unless stated otherwise, we denote $\{\lambda_i, \nu_i\}$ and $\{\tilde{\lambda}_i, \tilde{\nu}_i\}$ to be the $i$-th eigenvalue and eigenvector of $L$ and $\tilde{L}$, respectively. Next, we use the notation $[A, B]$ to be the concatenation operator between matrices or vectors. Finally, we assume $\mathbb{H}$ is a Reproducing Kernel Hilbert Space (RKHS) with a positive definite kernel function implementing the inner product. The inner product is defined so that it satisfies the reproducing property.

\textbf{Graph Convolutional Network.} A GCN is a multi-layer feed-forward neural network that propagates and transforms node attributes along with an undirected graph $G$. The layer-wise propagation rule in layer $k$ is $H^{k+1} = \sigma(\tilde{D}^{-\frac{1}{2}} \tilde{A} \tilde{D}^{-\frac{1}{2}} H^kW^k)$. Here, $W^k$ is the trainable model parameter in layer $k$, $\sigma$ is an activation function (e.g., ReLU), $H^k$ is the hidden representations in the $k$-th layer, and $H^0=X$ for initialization. 
To stabilize the optimization, the GCN adds self-loops to each node to make the largest eigenvalue of normalized Laplacian smaller \cite{wu2019simplifying}. 
Research has demonstrated that vectors containing fitted values of the GCN and its variants can be reduced by a customized low-pass filter, corresponding to the eigensystem of $\tilde{L}$. 
Specifically, \citet{wu2019simplifying} concludes that the low-pass filter for the simplified GCN is parameterized by the matrix-valued filter function $g(\tilde{\lambda}_i)=(1-\tilde{\lambda}_i)^c$, where $c$ is the number of graph convolution layers.
More accurately, the matrix-valued filter function of the vanilla GCN can be further approximated \cite{balcilar2021analyzing} as $g(\tilde{\lambda}_i)=1- \frac{\bar{p}}{\bar{p}+1} \tilde{\lambda}_i$, where $\bar{p}$ is the average node degree.

\textbf{Label Propagation (LP).} As the most classic graph-based semi-supervised learning method, label propagation \cite{zhou2003learning} propagates label information from labeled data to unlabeled data along the graph. At the $k$-th iteration, it updates the predictive labels by $Y^{k+1} = \gamma D^{-\frac{1}{2}} A D^{\frac{1}{2}} Y^k + (1-\gamma)Y$, where $\gamma$ is a hyper-parameter and $Y$ is a one-hot label matrix with setting $i$-th row to be zeros if $v_i$ is unlabeled. The fitted values of LP for all data are given in the closed form $Y^{lp} = (I + \frac{\gamma}{1-\gamma}L)^{-1}Y$, which also yields a low-pass filter \cite{li2019label} with $i$-th factor being reduced by the filter function $g(\lambda_i)=\frac{1}{1+a_1 \lambda_i}$ and $a_1 = \frac{\gamma}{1-\gamma}, a_1 \ge 0$.

\textbf{Ridge Regression.} Our proposed high-pass filter is deeply rooted in ridge regression. As a classic supervised model, ridge regression \cite{friedman2001elements} shrinks the regression coefficients of the linear regression model by imposing a penalty on their size. 
Formally, given training data $X$ and corresponding labels $Y$, the least square solution for ridge regression, parameterized by $\beta$, is $\hat{\beta} = (X^TX+\lambda I)^{-1}X^TY$, where $\lambda > 0$ is a hyper-parameter. The additional insight into the nature of $\hat{\beta}$ can be revealed by performing SVD on data $X=UDV^T$. It reveals that ridge regression shrinks the coordinates of $U$ by the factor $\frac{d_i^2}{(d_i^2 + \lambda)}$, where $d_i \ge 0$ is the $i$-th singular value of $X$. Namely, smaller $d_i$, corresponding to directions in the column space of $X$ having a smaller variance, suffer stronger shrinkage. 

\textbf{Kernel Ridge Regression (KRR)}. Following the Representer theorem \cite{scholkopf2001generalized}, a model $f$ could be transformed into kernel expansion over training data, e.g., $f(x)=\sum_i^N \alpha_i K(x_i, x)$. Building upon this, the linear ridge regression mentioned above could be extended to the kernel ridge regression: $\hat{\alpha}=\arg\min_\alpha \|Y-K\alpha\|_2^2 + \lambda \alpha^T K \alpha$, from which we derive the fitted values as $K\hat{\alpha} = K(K+\lambda I)^{-1}Y = \Gamma(K, \lambda)Y$. Building upon the eigen-expansion of kernel $K=U_k\Lambda U_k^T$ with $\Lambda_{ii} \ge 0$ being the $i$-th eigenvalue of $K$, kernel ridge regression also puts fewer penalties on large eigenvalues in the eigen-expansion (or so-called \textit{spectral decomposition}) of kernel $K$, with the $i$-th factor being $\frac{\lambda_i}{(\lambda_i + \lambda)}$. Note that $\Lambda_{ii} = \lambda_i$. Therefore, the fitted values of kernel ridge regression shrink by a high-pass spectral filter with the filter function $g(\lambda_i)=\frac{\lambda_i}{(\lambda_i + \lambda)}$.

\begin{figure*}[t]
\setlength{\abovecaptionskip}{2pt}   
\setlength{\belowcaptionskip}{2pt}
   \centering  
   \includegraphics[width=0.92\textwidth]{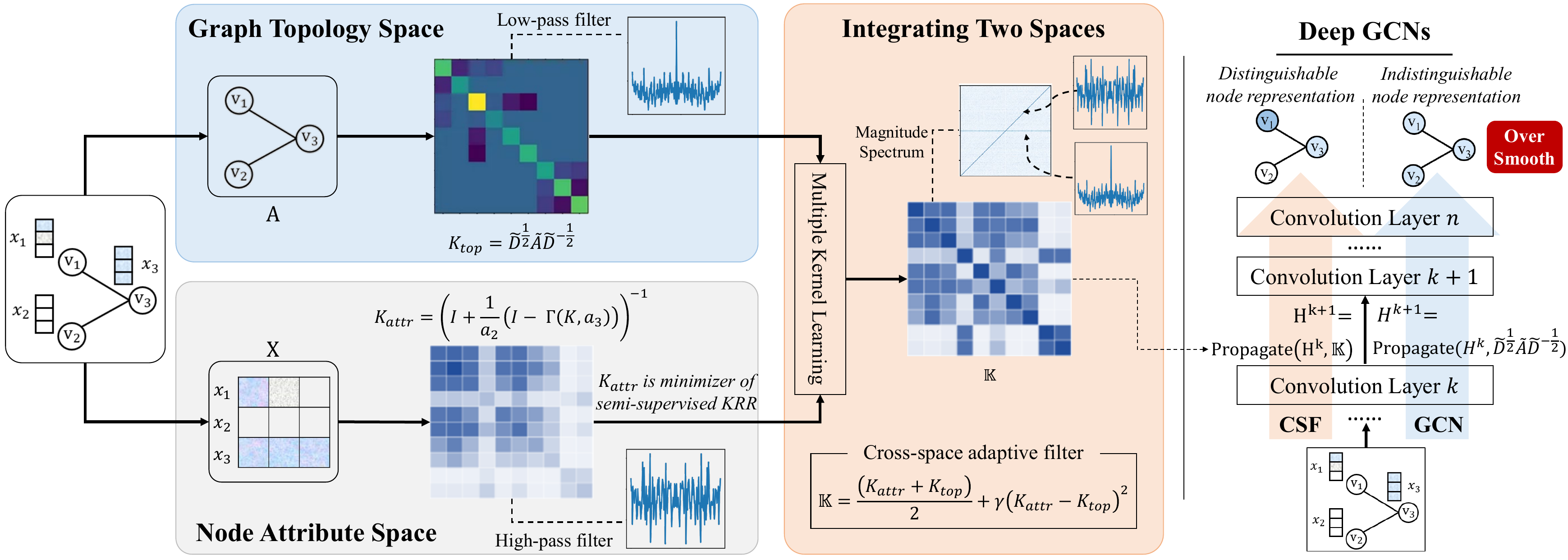} 
   \caption{Overview of CSF. 
   We leverage both the graph topology and node attribute spaces to produce a cross-space adaptive filter for alleviating the over-smoothing problem and improving the effectiveness of deep GCNs.
  }  
   \vspace{-5mm}
   \label{sand}
\end{figure*} 

\section{Cross-Space Filter}

In this section, we elaborate on CSF. It first leverages the correlations of node attributes to extract the interpretable high-pass filter arising from a Mercer’s kernel (cf. Section \ref{sec:high-pass}). Then, it casts the conventional low-pass topology-based filter into another kernel, unifying the two filters in RKHS (cf. Section \ref{sec:low-pass}). Finally, the cross-space adaptive filter is obtained by applying a simple multiple-kernel learning technique to fuse the information in both the topology and attribute spaces (cf. Section \ref{atsdtfgasertq23}).

\subsection{High-pass Filter From Node Attribute Space}
\label{sec:high-pass} 
In deep GCNs, node representations in the whole graph get similar to each other and finally converge towards the same constant value. To this end, we aim to design a high-pass filter based on the node attributes to provide prior knowledge on which nodes should have dissimilar representations.
To extract a filter from the attribute space, one straightforward idea is to create an attribute-based graph\footnote{An attribute-based graph is obtained by calculating similarities of node attributes.} and implement the GCN convolution operation. However, the challenge lies in how to extract a high-pass filter, especially in an interpretable way. 
In this paper, we resort to the KRR (cf. Section \ref{prelimiary}), which provides a rough idea for constructing a high-pass filter using node attributes in the graph. However, the general setting of the learning paradigm of GCNs is semi-supervised, where only partially labeled data are provided. 
In this section, we solve this challenge by solving an optimization problem of semi-supervised KRR and deriving an interpretable high-pass filter.

\textbf{Solving Semi-supervised KRR}. Let $\mathbb{Y}=\begin{bmatrix}
Y_L \\ Z
\end{bmatrix}$ denote the one-hot label matrix of all nodes, where $Z$ and $Y_L$ denote the labels of unlabeled and labeled nodes, respectively. The semi-supervised KRR is formulated as the following optimization problem.
\begin{equation}
\label{d3fs}
    \min_{\alpha, Z} \|\mathbb{Y}-K\alpha\|_2^2 + a_3 \alpha^T K \alpha + a_2 \|Z\|_2^2,
\end{equation}
where $K$ is a kernel matrix over node attributes, and two regularization parameters $a_3$ and $a_2$ are introduced to control the complexity of model/hypothesis $f$ and the uniform prior on pseudo-labels $Z$, respectively. 
Although the problem is non-convex, it can still be approximated by a closed-form solution. In particular, denoting $\alpha=(a_3I+K)^{-1} \mathbb{Y}$ as a variable parameterized by $a_3$, we plug it back into Problem (\ref{d3fs}) and yield the following penalized quadratic problem that only involves optimizing $Z$.
\begin{equation}
\label{ddddda2}
\min_{Z} \mathbb{Y}^T (I - \Gamma(K, a_3)) \mathbb{Y} + a_2 \|Z\|_2^2.
\end{equation}
Notably, the transductive solution of the above optimization problem could be easily obtained in a closed form: $Z = -(\hat{K}_{uu} + a_2 I)^{-1} \hat{K}_{ul} Y_L$. We highlight that such a solution is the minimizer of the kernel regularized functional, taking the form of the regularized harmonic solution of label propagation \cite{kveton2010semi}. 

\textbf{Deriving High-pass Filter}.
Denote $Y_0$ as a one-hot label matrix, with the $i$-th row being zero if the $i$-th node in the graph is unlabeled. The problem \ref{ddddda2} is translated as follows.
\begin{equation}
\label{ddddda}
\min_{\mathbb{Y}} \mathbb{Y}^T (I - \Gamma(K, a_3)) \mathbb{Y} + a_2 \|Y_0-\mathbb{Y} \|_2^2.
\end{equation}
Consequently, the fitted value of semi-supervised KRR is derived as $\mathbb{Y}=K_{attr} Y_0$, where $K_{attr}=(I+\frac{1}{a_2} \hat{K})^{-1}$ and $\hat{K}= I - \Gamma(K, a_3)$.

\begin{proposition}
\label{yyyyyy}
\textit{$\hat{K}= I - \Gamma(K, a_3)$ is a valid kernel if and only if $a_3 > 0$. Also, $K_{attr}$ is a valid kernel if and only if $a_3 > 0$ and $a_2 > 0$.}
\end{proposition}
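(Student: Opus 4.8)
The plan is to turn the whole statement into a spectral computation. Since $\hat K$ and $K_{attr}$ are rational functions of the single symmetric matrix $K$, all three matrices are simultaneously diagonalizable, and here ``valid kernel'' simply means symmetric positive semi-definite (the finite-sample Mercer/Gram condition). I would read the claim in the strong sense that $\hat K$, resp.\ $K_{attr}$, must be symmetric PSD for \emph{every} kernel matrix $K$ over the attributes, so that the relevant eigenvalues sweep all of $[0,\infty)$. Writing $K=U\,\mathrm{diag}(\kappa_1,\dots,\kappa_N)\,U^{\top}$ with $\kappa_i\ge 0$, the first move is the algebraic identity
\[
  \hat K=I-\Gamma(K,a_3)=I-K(K+a_3 I)^{-1}=a_3\,(K+a_3 I)^{-1},
\]
so that $\hat K=U\,\mathrm{diag}\!\bigl(g(\kappa_i)\bigr)\,U^{\top}$ with $g(t)=a_3/(t+a_3)$. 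Symmetry is inherited from $K$, so the question collapses to: for which $a_3$ is $g$ pole-free and non-negative on $[0,\infty)$?

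For the first claim this is now immediate. If $a_3>0$ then $K+a_3 I\succ 0$, hence invertible with positive-definite inverse, and $\hat K=a_3(K+a_3 I)^{-1}\succ 0$ is a valid kernel (equivalently $g(t)\in(0,1]$ on $[0,\infty)$). If $a_3=0$ then $g$ is undefined at $t=0$, and indeed $\Gamma(K,0)=KK^{-1}$ is undefined for the generic rank-deficient attribute kernel, collapsing to the zero operator even when $K$ is invertible. If $a_3<0$ then $g$ has a pole at $t=-a_3\in[0,\infty)$ and $g(t)<0$ for $t>-a_3$, so any $K$ with an eigenvalue larger than $|a_3|$ forces a negative eigenvalue in $\hat K$. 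Hence $\hat K$ is a valid kernel iff $a_3>0$.

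For $K_{attr}$ I would reuse the same picture. From $K_{attr}=(I+\tfrac1{a_2}\hat K)^{-1}=a_2(\hat K+a_2 I)^{-1}$, its eigenvalues are $h(\kappa_i)$ with $h(t)=a_2/(a_2+g(t))$. If $a_3>0$ and $a_2>0$, then $g(t)\in(0,1]$, so $a_2+g(t)>0$ and $h(t)>0$ on $[0,\infty)$, giving $K_{attr}\succ 0$; this ``if'' direction really does use $a_3>0$, which is precisely what makes $\hat K$ --- and thus the closed form $\mathbb Y=K_{attr}Y_0$ for the minimizer of Problem~(\ref{ddddda}) --- well defined in the first place. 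For the converse: if $a_3\le 0$ then $g$ already has a pole and takes negative values on $[0,\infty)$, and a short case check shows the composed map $h$ does too (choose $K$ with an eigenvalue just above $-a_3$), so $K_{attr}$ is not a valid kernel; and if $a_3>0$ but $a_2=0$ then $K_{attr}$ is undefined, while for $-1\le a_2<0$ the range of $g$ is $(0,1]$, which contains $-a_2$, so $a_2+g(t)=0$ for an attainable $t$ (or $h$ has a genuine sign change), and again $K_{attr}$ fails to be PSD (and the minimization in Problem~(\ref{ddddda}) loses strong convexity, invalidating its closed-form solution). Combining yields ``$K_{attr}$ is a valid kernel $\iff a_3>0$ and $a_2>0$''.

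I expect the two ``only if'' directions, together with the boundary cases $a_3=0$ and $a_2=0$, to be the main obstacle, because making them rigorous forces one to be explicit about what ``valid kernel'' is quantified over. The clean statement is the transfer-function one above, which I would put at the start of the proof. One point that genuinely needs care is strongly negative $a_2$: when $a_2<-1$ one has $a_2+g(t)<0$ throughout $(0,1]$, so $\hat K+a_2 I\prec 0$ and $K_{attr}=a_2(\hat K+a_2 I)^{-1}$ becomes positive definite again --- so to obtain the stated biconditional exactly one should restrict to $a_2>-1$, and in particular to the regularization-theoretic range $a_2>0$ intended here, which I would flag explicitly rather than leave implicit.
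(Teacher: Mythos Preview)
Your approach is essentially the same as the paper's: diagonalize $K$ and read off the eigenvalues of $\hat K$ and $K_{attr}$ as rational functions of the $\lambda_i$. The paper writes the eigenvalues of $\hat K$ as $1-\tfrac{\lambda_i}{\lambda_i+a_3}=\tfrac{a_3}{\lambda_i+a_3}$ and stops after checking non-negativity when $a_3>0$; your identity $\hat K=a_3(K+a_3I)^{-1}$ is the same computation packaged more cleanly.

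Where you differ is in scope. The paper's proof only establishes the ``if'' direction (it literally concludes ``$\hat K$ is a valid kernel \emph{if} $a_3>0$'' and ``$K_{attr}$ is a valid kernel \emph{if} $a_2>0$ and $a_3>0$''), never addressing ``only if''. You do address it, and in doing so you uncover a genuine wrinkle the paper misses: when $a_3>0$ but $a_2<-1$, the eigenvalues $a_2/(a_2+g(t))$ are again positive, so $K_{attr}$ is PSD and the stated biconditional fails without an a~priori restriction to the regularization range $a_2>0$ (or at least $a_2>-1$). Your flagging of this is correct and is exactly the kind of care the ``only if'' claim needs; the paper simply does not engage with it.
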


\begin{proof}
Note that $\hat{K} = 1-\Gamma(K, a_3) = 1 - K(K+a_3 I)^{-1}$. We apply the SVD to the kernel $K=U_k\Lambda U_k^T$ and bring it back to $\hat{K}$, and we have the following equation holds.
\begin{equation}
    \hat{K} = 1 - U_k (\Lambda(\Lambda + a_3I)^{-1}) U_k^T = \sum_{i} (1-\frac{\lambda_i}{\lambda_i + a_3}) \nu_i^T\nu_i,
\end{equation}
where $\nu_i$ and $\lambda_i$ correspond to the eigensystems of $K$ after SVD. Note that $\lambda_i > 0$ holds as $K$ is a valid kernel. Then, $\hat{K}$ is a valid kernel if $a_3 > 0$. This is because $1-\frac{\lambda_i}{\lambda_i + a_3}=\frac{a_3}{\lambda_i + a_3} \ge 0$ holds for every eigenvalue of kernel $K$, as long as $a_3 > 0$ holds.  Similarly, together with the SVD operation, we also conclude that $K_{attr}=(I+\frac{1}{a_2} \hat{K})^{-1}$ is a valid kernel if $a_2 > 0$ and $a_3 > 0$.
\end{proof}

\begin{proposition}
\label{yyyyyyd}
The fitted values of semi-supervised kernel ridge regression are shrunk by a high-pass spectral filter, with the $i$-th factor being $g(\lambda_i)=\frac{a_2(\lambda_i + a_3)}{a_3 + a_2(\lambda_i + a_3)}$, where $a_2 > 0$, $a_3 > 0$, and $\lambda_i$ is the eigenvalue of kernel matrix $K$. 
\end{proposition}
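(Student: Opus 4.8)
The plan is to trace the eigenvalues of the composite operator $K_{attr} = (I + \tfrac{1}{a_2}\hat K)^{-1}$ through the two nested spectral transformations and read off the filter factor, then verify the claimed monotonicity/high-pass behavior. First I would recall from the derivation of the fitted values that $\mathbb{Y} = K_{attr} Y_0$ with $\hat K = I - \Gamma(K, a_3) = I - K(K + a_3 I)^{-1}$, so $K_{attr}$ is a matrix function of $K$ and hence shares its eigenvectors. I would use the SVD/eigendecomposition $K = U_k \Lambda U_k^T$ already introduced in the proof of Proposition~\ref{yyyyyy}, where $\lambda_i > 0$ is the $i$-th eigenvalue of the valid kernel $K$.

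Next I would compute the eigenvalue of $K_{attr}$ associated with eigenvalue $\lambda_i$ of $K$ by composing the scalar maps. From Proposition~\ref{yyyyyy} we already have that $\hat K$ has eigenvalue $1 - \tfrac{\lambda_i}{\lambda_i + a_3} = \tfrac{a_3}{\lambda_i + a_3}$. Then the operator $I + \tfrac{1}{a_2}\hat K$ has eigenvalue $1 + \tfrac{1}{a_2}\cdot\tfrac{a_3}{\lambda_i + a_3} = \tfrac{a_2(\lambda_i + a_3) + a_3}{a_2(\lambda_i + a_3)}$, and inverting gives the filter factor
\begin{equation}
g(\lambda_i) = \frac{a_2(\lambda_i + a_3)}{a_3 + a_2(\lambda_i + a_3)},
\end{equation}
which is exactly the claimed expression. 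Since $a_2 > 0$, $a_3 > 0$, and $\lambda_i > 0$, every factor is well-defined and strictly positive, so $K_{attr}$ acts as a genuine spectral filter on $Y_0$ in the eigenbasis of $K$.

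Finally I would argue this is a \emph{high-pass} filter by showing $g$ is strictly increasing in $\lambda_i$: writing $g(\lambda) = 1 - \tfrac{a_3}{a_3 + a_2(\lambda + a_3)}$, the subtracted term is strictly decreasing in $\lambda$, so $g$ is strictly increasing, meaning larger eigenvalues of $K$ (which, because $\hat K$ inverts the order, correspond to the high-frequency / less-smooth directions that a low-pass filter would suppress) are passed with larger weight, while small eigenvalues are shrunk more aggressively. I expect the only subtlety — not really an obstacle — to be making precise the sense in which ``large $\lambda_i$ of $K$'' corresponds to ``high frequency'': this is inherited from the fact that $\hat K = I - \Gamma(K,a_3)$ reverses the spectral ordering of $K$ (the factor $\tfrac{a_3}{\lambda_i + a_3}$ is decreasing), so the filter that is low-pass relative to $\hat K$'s spectrum is high-pass relative to $K$'s spectrum; I would state this explicitly to justify the terminology, and otherwise the proof is a routine composition of scalar spectral maps.
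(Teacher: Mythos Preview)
Your proposal is correct and follows essentially the same route as the paper: compute the eigenvalues of $\hat K$ from those of $K$, then invert $I+\tfrac{1}{a_2}\hat K$ on the same eigenbasis to obtain the stated factor. The paper's own proof is terser (it simply states the low-pass factor $\tfrac{a_2}{a_2+\hat\lambda_i}$ relative to $\hat K$ and then substitutes $\hat\lambda_i$), so your explicit composition of scalar maps and the monotonicity argument for the ``high-pass'' label are welcome additions rather than a different method.
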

\begin{proof}
The fitted values $\mathbb{Y}=K_{attr} Y_0=(I+\frac{1}{a_2} \hat{K})^{-1}Y_0$ are shrunk by a low-pass filter with factor $\frac{a_2}{a_2 + \hat{\lambda}_i}$, where $\hat{\lambda}_i$ is the eigenvalue of $\hat{K}$. More importantly, $\mathbb{Y}$ is also shrunk by a high-pass spectral filter with factor $g(\lambda_i)=\frac{a_2(\lambda_i + a_3)}{a_3 + a_2(\lambda_i + a_3)}$, where $\lambda_i$ is the eigenvalue of kernel matrix $K$ that is derived from node attributes.
\end{proof}

\vspace{-2mm}
Propositions \ref{yyyyyy} and \ref{yyyyyyd} show that we could extract high-pass frequency information about node attributes by solving the semi-supervised KRR. Unlike the Laplacian matrix $\tilde{L}$ used in GCN's convolutional operation to extract low-pass topological information, $K_{attr}$ extracts high-pass attribute-based information.

\begin{remark}
\label{ermark23}
The kernel matrix $K_{attr}$ adjusts the shrinkage effect on the low-frequency signals in the attribute-based graph via two hyper-parameters, $a_2$ and $a_3$.
\end{remark}
\begin{figure}
\setlength{\abovecaptionskip}{2pt}   
\setlength{\belowcaptionskip}{2pt}
   \centering  
   \includegraphics[width=0.45\textwidth]{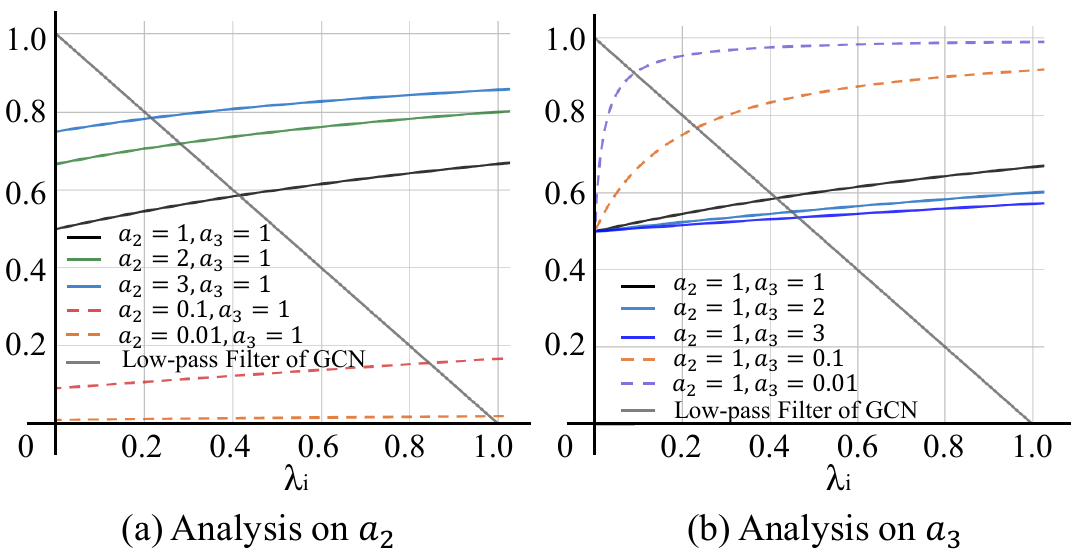} 
   \caption{Illustration of filter function $g(\lambda_i)$ of $K_{attr}$. This demonstrates that $K_{attr}$ corresponds to a high-pass filter.}  
   \vspace{-5mm}
   \label{sandplot}
\end{figure} 
While $a_2>0$ and $a_3>0$ are the hyper-parameters for the regularization terms in Problem \ref{d3fs}, they, as shown in Figure \ref{sandplot}, control the shrinkage effect on the low-frequency signals in the attribute-based graph. 
Specifically, when $a_2$ or $a_3$ take very large values, our high-pass filter would become an all-pass filter\footnote{When $a_2 \to \infty$ or $a_3 \to 0$, $g(\lambda_i) \to 1$. When $a_3 \to \infty$, $g(\lambda_i) \to \frac{a_2}{a_2 +1}$}, placing equal importance to both low-frequency and high-frequency signals in the graph.
On the one hand, $a_2$ controls the shrinkage strength, which is used to compress the scale of node attributes/representations. The smaller $a_2$, the stronger the compression ability. The information of node attributes would be lost when $a_2$ is very small (e.g., $a_2=0.01$).
This property inspired us to take different values of $a_2$ when dealing with the assortative and disassortative graphs in the experiments\footnote{Refer to \citet{foster2010edge} and Networkx package to check if a graph is disassortative.}, where the values of $a_2$ on disassortative graphs should be large. This highlights the importance of node attributes in multi-layer convolutional operations.
On the other hand, no matter what value $a_3$ takes, it always prefers high-pass signals. Unlike $a_2$, $a_3$ controls the frequency range of the low-pass signals that need to be shrunk. The smaller $a_3$, the narrower the frequency range. Note that when $a_3 \to 0$ or $a_3 \to \infty$, our high-pass filter would become an all-pass filter.
Besides, $a_3$ also controls the complexity of the semi-supervised KRR model. Thus, $a_3$ should not be too large or too small.
Therefore, these two parameters control the shrinkage effect of our filter $K_{attr}$ on low-frequency signals in the attribute-based graph. It is worth noting that our filter remains a high-pass filter, regardless of the values $a_2$ and $a_3$ take. In some extreme cases, it acts as an all-pass filter, but it never becomes a low-pass filter. Refer to Appendix \ref{paraan} for experimental analysis.

To sum up, the proposed spectral filter allows us to capture the correlations among node attributes and extract high-frequency information from the attribute space. Moreover, the derived filter is interpretable, as it's the minimizer of the semi-supervised kernel ridge regression problem.

\subsection{Low-pass Filter From Graph Topology Space}
\label{sec:low-pass}
The conventional low-pass filter of GCN is defined based on the Fourier graph. However, our proposed high-pass filter is defined based on Mercer's kernel. Thus, it would be challenging to combine these two filters and utilize their benefits simultaneously. To this end, we drew inspiration from the previous literature \cite{smola2003kernels}, which links the normalized Laplacian and Mercer's kernels on the graph. We show how to cast the low-pass filter as a Mercer's kernel in the context of GCNs, unifying the two filters in RKHS. 

Specifically, let $r(e)$ be a Laplacian regularization function that monotonically increases in $e$ and that $r(e) \ge 0$ holds for all $e \in [0, 2]$, and $\{(e_i, \phi_i)\}$ be the eigensystem of a normalized Laplacian matrix, where $e_i$ and $\phi_i$ are the $i$-th eigenvalue and eigenvector. 
\citep{smola2003kernels} shows that a kernel can be defined as $K=\sum_{i=1}^m r^{-1}(e_i) \phi_i \phi_i^T$, where $r^{-1}$ is the spectral filter function\footnote{Note that the pseudo-inverse and $0^{-1}=0$ are applied wherever necessary.}. 
In the context of GCNs, we define a specified Laplacian regularization function as $g^{-1}_1$ for GCN, where $g_1(e) = 1-\frac{\bar{p}}{\bar{p} + 1}e$ is the filter function of GCN discussed in Section \ref{prelimiary}. In this case, the filters of GCN are cast into the Mercer's kernel space, where the low-pass filter of GCN is the following one-step random walk kernel $K^{gcn}=I - \frac{\bar{p}}{\bar{p}+1} \tilde{L}$ with a spectral filter on the eigenvalues of the graph Laplacian, $r(\tilde{\lambda}_i) = \frac{\bar{p} + 1}{\bar{p}(1-\tilde{\lambda}_i) + 1}$.

\begin{algorithm}
\caption{Pseudo-code of CSF}
\label{alg:algorithm}
\begin{algorithmic}[1] 
\ENSURE Graph $G$ and node attribute matrix $X$. Maximum epoch $EP=150$.
\REQUIRE $\text{Top-k}=20$ for KNN graph. $a_3=1$, $a_2=1$. Refer to \cite{de2010methods,diego2004combining}, $\gamma$ is selected by cross-validation.
\STATE \%\%\% \textit{Obtain high-pass filter from attribute space}. \%\%\% 
\STATE Construct a Gaussian kernel $K$ via KNN graph on $X$.
\STATE $\Gamma(K, a_3) = K(K+a_3 I)^{-1}$
\STATE $K_{attr}=(I+\frac{1}{a_2} (I - \Gamma(K, a_3)))^{-1}$.
\STATE \%\%\% \textit{Obtain low-pass filter from topology space via GCN}. \%\%\% 
\STATE $K_{top}=I-\tilde{L}=\tilde{D}^{-\frac{1}{2}} \tilde{A} \tilde{D}^{-\frac{1}{2}}$
\STATE \%\%\% \textit{Obtain cross-space adaptive filter by MKL \cite{de2010methods,diego2004combining}}. \%\%\% 
\STATE $\mathbb{K} = (K_{attr} + K_{top})/2 + \gamma (K_{attr} - K_{top})(K_{attr} - K_{top})$.
\STATE \%\%\% \textit{Perform propagation} \%\%\% 
\STATE Initialize $epo = 0$, $k=0$.
\STATE Set $H^k=X$, and diagonal matrix $\hat{D}$, with $\hat{D}_{ii}=\sum_{j}\mathbb{K}_{ij}$.
\WHILE{$EP > epo$}
    \STATE $H^{k+1} = \sigma(\hat{D}^{-\frac{1}{2}} \mathbb{K} \hat{D}^{-\frac{1}{2}} H^k W^k) \oplus X$
    \STATE $epo += 1$
\ENDWHILE
\end{algorithmic}
\end{algorithm}

\subsection{Integrating Topology-based and Attribute-based Filters}
\label{atsdtfgasertq23}
To solve the problem of overlooking node attributes in existing adaptive filter methods, we aim to integrate topology and attribute information from the perspective of Mercer's kernel. As shown in Figure \ref{sand}, our adaptive filter CSF is obtained by applying a simple yet effective multiple-kernel learning technique to fuse the information. We elaborate on our CSF below.

\begin{figure}
\setlength{\abovecaptionskip}{2pt}   
\setlength{\belowcaptionskip}{2pt}
   \centering  
   \includegraphics[width=0.33\textwidth]{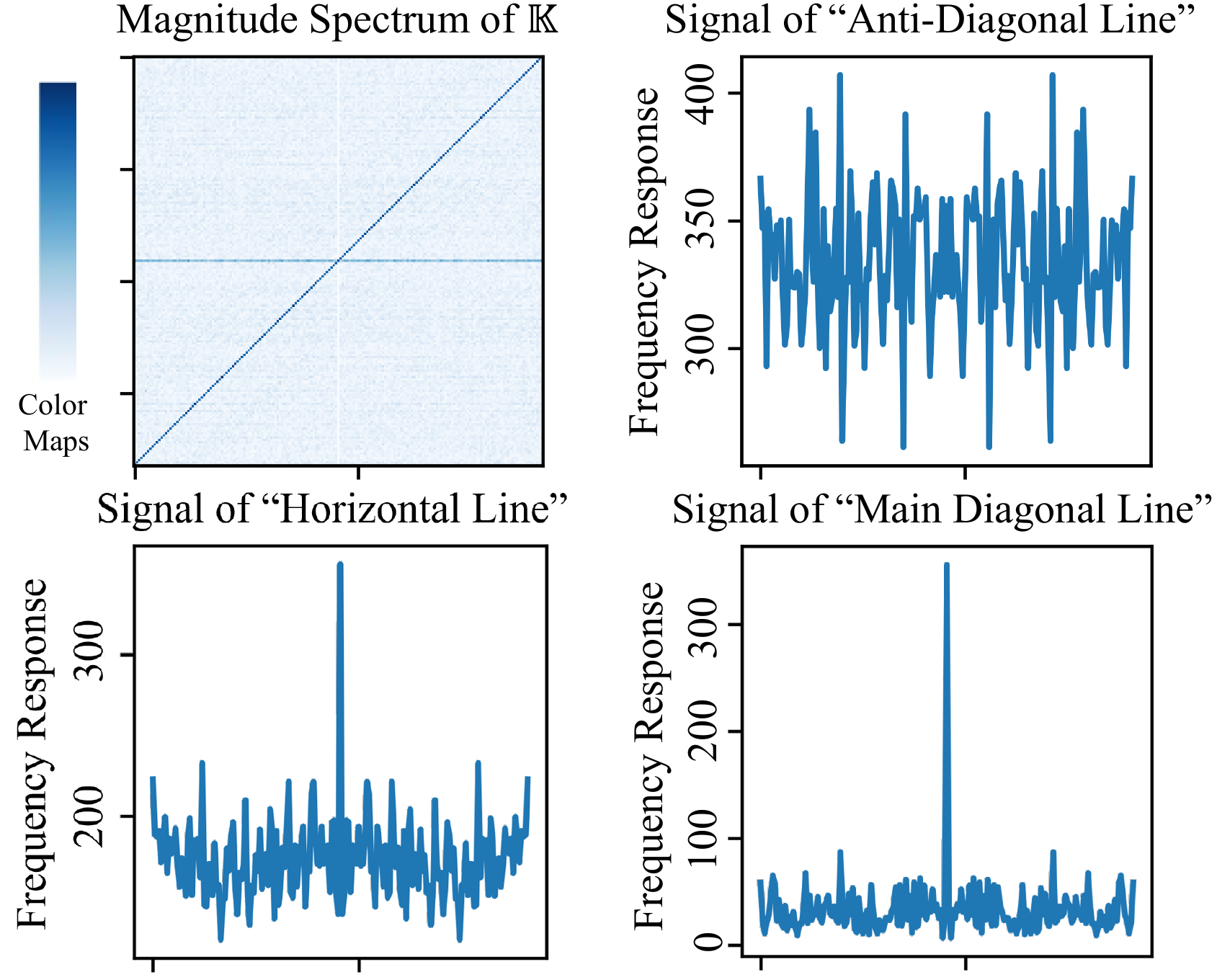} 
   \caption{Illustration of magnitude spectrum and frequency response. This demonstrates that $\mathbb{K}$ corresponds to an adaptive filter that combines both low-pass and high-pass filters.} 
   \vspace{-5mm}
   \label{sandplot2}
\end{figure} 

\textbf{Obtain Attribute-based Filter}. Given training data with limited labels, we first construct a kernel based on the node attribute $X$ via the k-nearest-neighbor (KNN) graph\footnote{Refer to Appendix \ref{robu} for the robustness analysis on the KNN graph.}, whose edges are weighted by the Gaussian distance. This forms a valid Gaussian kernel $K$. To enhance the optimization stability, we also exploit the re-normalization and self-loop tricks on the kernel $K$, which are used in vanilla GCN. Next, we calculate $K_{attr}$ to build the high-pass spectral filter and capture the information of the node attributes.

\textbf{Obtain Topology-based Filter}. 
The one-step random walk kernel from vanilla GCN is utilized, which we found to be highly effective. In our experiments, we resort to the raw propagation form of GCN and set $K_{top}=I-\tilde{L}=\tilde{D}^{-\frac{1}{2}} \tilde{A} \tilde{D}^{-\frac{1}{2}}$. 
We call attention to the importance of node attributes to the over-smoothing problem of GCN, and we will defer the research question of proposing a new topology-based filter to our future work.


\textbf{Obtain Cross-Space Filter}. To fuse the information across both graph topology and node attribute spaces, a multiple kernel learning (MKL) algorithm is adopted. Inspired by previous work, a \textit{squared matrix-based MKL} \cite{de2010methods,diego2004combining, gonen2011multiple} is used to integrate topology and attribute information from the perspective of Mercer’s kernel.
\begin{equation}
\label{wahtisthat}
\mathbb{K} = \frac{K_{attr} + K_{top}}{2} + \gamma (K_{attr} - K_{top})(K_{attr} - K_{top}).
\end{equation}
Referring to \cite{de2010methods,diego2004combining}, the second term in $\mathbb{K}$ represents the difference of information between $K_{attr}$ and $K_{top}$, and $\gamma$ is a positive constant selected by cross-validation used to control the relative importance of this difference. 
Other advanced MKL methods may also work, but that is not the research focus of this paper. In this paper, we highlight that this squared matrix-based algorithm is effective and simple without trainable parameters. It not only avoids the computational overhead of parameter learning and improves time efficiency, but also shows the capability of leveraging the advantages of two spaces (cf. Section \ref{d73nd}). Importantly, the following corollary shows that Eq.\ref{wahtisthat} produces a valid kernel matrix $\mathbb{K}$, which is the foundation of being a valid filter on the graph according to \citet{smola2003kernels}.

\begin{figure*}[h]
\setlength{\abovecaptionskip}{2pt}   
\setlength{\belowcaptionskip}{2pt}
\centering
\includegraphics[width=0.8\textwidth]{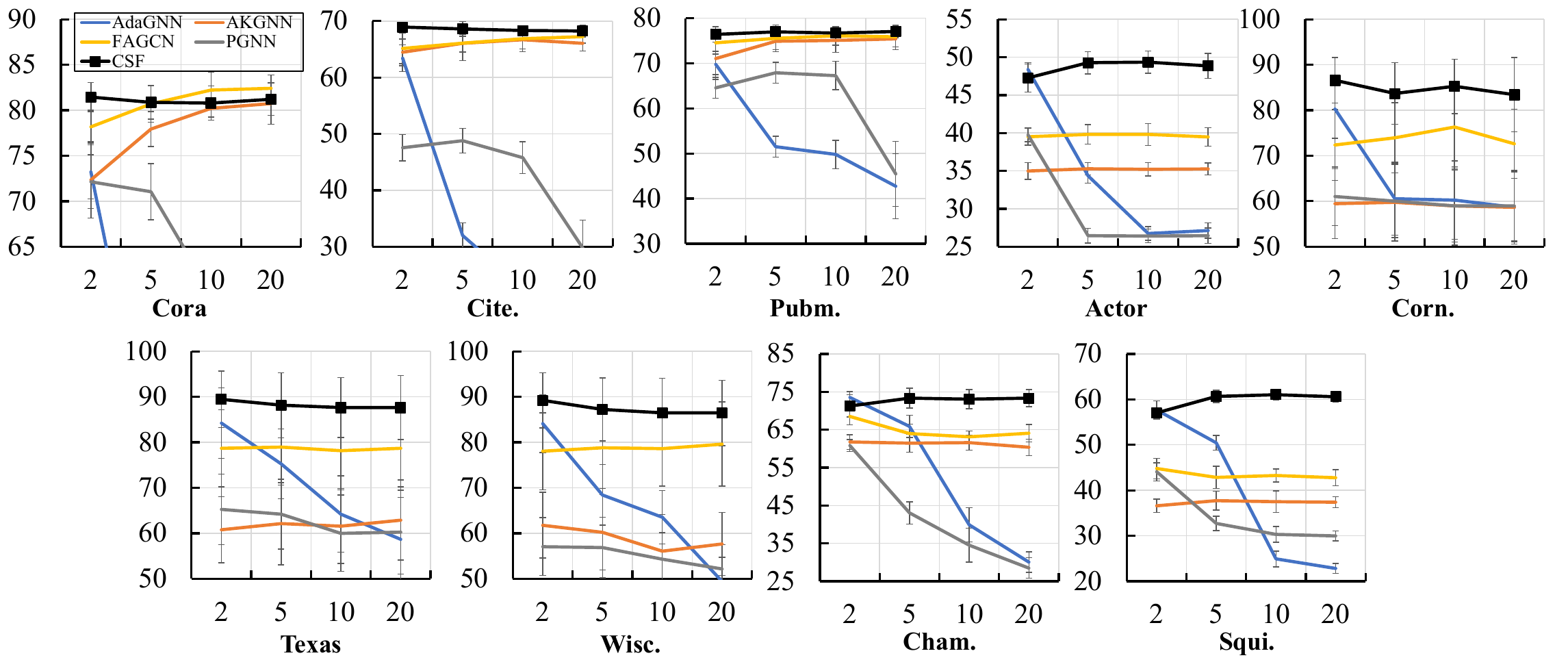}
\caption{Over-smoothing problem evaluation of adaptive filter-based methods. We tune the number of layers of each method (i.e., the X-axis). The Y-axis represents the classification accuracy (\%). This indicates that CSF outperforms others in terms of its robustness to over-smoothing problem and its effectiveness on downstream tasks.
}
\label{fig:idea87}
\end{figure*}

\begin{corollary}
$\mathbb{K}$ is a positive definite kernel matrix.
\end{corollary}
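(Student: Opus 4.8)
The plan is to exhibit $\mathbb{K}$ as a sum of two matrices, each of which is positive semidefinite, with at least one of them \emph{strictly} positive definite; the whole sum is then strictly positive definite, and symmetry is immediate. First I would record the structural facts we are entitled to use. By Proposition~\ref{yyyyyy}, under the standing assumptions $a_2>0$ and $a_3>0$ the matrix $K_{attr}$ is a valid kernel, hence symmetric, and in fact strictly positive definite: as computed in the proof of Proposition~\ref{yyyyyyd}, its eigenvalues are $\frac{a_2}{a_2+\hat\lambda_i}\in(0,1)$. Likewise, $K_{top}=I-\tilde L=\tilde D^{-\frac12}\tilde A\tilde D^{-\frac12}$ is the one-step random walk kernel established in Section~\ref{sec:low-pass}, hence a valid (symmetric, positive semidefinite) Mercer kernel; its symmetry is anyway immediate since $\tilde A$ is symmetric and $\tilde D$ is diagonal.

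Next I would treat the two summands of Eq.~\ref{wahtisthat} separately. The term $\frac12(K_{attr}+K_{top})$ is a nonnegative linear combination of positive semidefinite matrices, hence positive semidefinite; moreover, because $K_{attr}$ is strictly positive definite, so is this term. For the second term, set $M:=K_{attr}-K_{top}$. As a difference of symmetric matrices, $M$ is symmetric, so the literal product appearing in Eq.~\ref{wahtisthat} satisfies $MM=M^TM$, and therefore $x^T(MM)x=\|Mx\|_2^2\ge 0$ for every vector $x$; thus $MM$ is positive semidefinite, and scaling by the positive constant $\gamma$ preserves this. Note that we do not need $M$ itself to be a kernel --- only that the product $M\cdot M$ has the Gram form $M^TM$.

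Finally, $\mathbb{K}$ is the sum of a strictly positive definite matrix and a positive semidefinite one, so $x^T\mathbb{K}x>0$ for all $x\ne 0$; and $\mathbb{K}$ is symmetric as a sum of symmetric matrices. This establishes the corollary, and via the Laplacian/Mercer correspondence of \citet{smola2003kernels} it also certifies $\mathbb{K}$ as a legitimate spectral filter on the graph. The one step that needs any care --- and the only place the argument could break down --- is the symmetry of $M$: it is exactly what turns the bare product $MM$ in Eq.~\ref{wahtisthat} into the manifestly positive-semidefinite form $M^TM$, so if either factor kernel were non-symmetric this step would fail. Everything else is the routine closure of the positive semidefinite cone under addition and nonnegative scaling, together with the fact (used once) that the sum stays strictly positive definite as soon as one summand is.
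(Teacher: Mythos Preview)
Your proof is correct and follows essentially the same decomposition as the paper: show the averaged term is a kernel by closure under sums, and show the squared difference is positive semidefinite by exploiting the symmetry of $M=K_{attr}-K_{top}$. The only cosmetic differences are that you argue $MM=M^TM$ directly (the paper instead diagonalizes $M$ via the spectral theorem and squares the eigenvalues), and you additionally observe strict positive definiteness from the strict positivity of the eigenvalues of $K_{attr}$, which the paper does not make explicit.
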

\begin{proof}
    Apparently, the sum of two kernels yields a kernel. Therefore, the first term $\frac{K_{attr} + K_{top}}{2}$ is a kernel. Let $K_{at}=K_{attr} - K_{top}$. $K_{at}$ is symmetric since both $K_{attr}$ and $K_{top}$ are symmetric. Then, there exists an orthogonal matrix $Q_{at}$ such that $K_{at}=Q_{at}^T\Lambda_{at} Q_{at}$, where $\Lambda_{at}$ is a diagonal matrix whose elements are the eigenvalues of $K_{at}$. Now $K_{at}^2=Q_{at}^T\Lambda_{at}^2 Q_{at}$, that is, $K_{at}^2$ is positive semidefinite. Taking the first and second terms together, Eq.\ref{wahtisthat} provides a matrix $\mathbb{K}$ arising from a Mercer's kernel. Then $\mathbb{K}$ could play the role of a valid filter on the graph, according to \citet{smola2003kernels}.
\end{proof}

\begin{remark}
$\mathbb{K}$ corresponds to an adaptive filter that combines both low-pass and high-pass filters.
\end{remark}

To illustrate the adaptive filter, we plot the magnitude spectrum and frequency response of $\mathbb{K}$ on the Texas dataset in Figure \ref{sandplot2}. We selected three representative signals based on the magnitude spectrum results.
The frequency response of these signals indicates that $\mathbb{K}$ successfully integrates the low-pass (cf. the main diagonal and horizontal lines) and high-pass filters\footnote{This high-pass filter is less similar to an all-pass filter as the frequency response in the middle part is still lower than the other parts on the line.} (cf. the anti-diagonal line).

\textbf{Propagation rule of CSF}. Let $\hat{D}_{ii}=\sum_{j}\mathbb{K}_{ij}$ and $H^{0} = X$. The layer-wise propagation rule of our CSF method is concluded as $H^{k+1} = \sigma(\hat{D}^{-\frac{1}{2}} \mathbb{K} \hat{D}^{-\frac{1}{2}} H^k W^k) \oplus X$. Taking inspiration from the propagation rule of the label propagation algorithm, which attaches the response variable $Y$ to the propagation to ensure consistency between predicted labels and $Y$, we also attach raw attributes $X$ to the propagation process to enhance consistency with the raw feature $X$. Here, $\oplus$ means the concatenation operator.

We conclude the pseudo-code of CSF in Algorithm \ref{alg:algorithm}. Unlike GAT and GCN-BC, which suffer from memory overflow due to their high space complexity, our method requires a time complexity of $O(N^3)$ to obtain the cross-space filter due to kernel inversion. Fortunately, 1) we only calculate the kernel once (see Algorithm \ref{alg:algorithm}), 2) classic methods, such as the Nystrom method \cite{williams2000using, lu2016large} can help ease the calculation. In Appendix \ref{asdfaiwnaerfoqw34o22}, we reduce the complexity of the kernel inversion to $O(mN^2)$, where $m \ll N$, which is comparable to the filter construction of vanilla GCN (i.e., $O(N^2)$).

\section{Empirical Evaluation}
\label{d73nd}
To assess CSF's ability to mitigate the over-smoothing issue and promote the effectiveness power of deep GCN, we conduct comparative experiments with various baselines under different numbers of convolution layers (see Section \ref{os}). Moreover, we reveal more characteristics of CSF by conducting ablation studies on graph topology and node attributes (see Section \ref{abal}). We also analyze the parameter sensitivity and robustness of $K_{attr}$ in the Appendix \ref{paraan} and Appendix \ref{robu}, respectively. 

\subsection{Experiment Setup}
\textbf{Datasets.} Following previous studies on adaptive filters \cite{bo2021beyond, ju2022kernel}, nine commonly used node classification datasets are utilized. This includes \underline{three assortative graphs} (i.e., Cora, Citeseer, and Pubmed) and \underline{six disassortative graphs} (i.e., Actor, Cornell, Texas, Wisconsin, Chameleon, and Squirrel). In an assortative graph, nodes tend to connect with other nodes bearing more similarities, while ones in the disassortative graph are on the contrary. All datasets are available online. Refer to \citet{bo2021beyond} for more details on the datasets.

\noindent\textbf{Comparative Methods.} Focusing on the over-smoothing problem, we rely on \underline{adaptive filter} methods as our primary baselines. In particular, we consider FAGCN \cite{bo2021beyond} and PGNN \cite{fu2021p} which integrate low-pass and high-pass filters, AKGNN \cite{ju2022kernel} which integrates all-pass and low-pass filters, and AdaGNN \cite{10.1145/3459637.3482226} which has a trainable filter function. Additionally, we take \underline{representative GNNs} into comparison, including GCN \cite{kipf2017semi}, SGC \cite{wu2019simplifying}, GAT \cite{velivckovic2018graph}, GraphSage \cite{hamilton2017inductive}. To evaluate the effectiveness of node attributes, we compare with \underline{attribute enhanced GNNs}, including APPNP \cite{gasteiger2018predict}, JKNet \cite{xu2018representation} and GCN-BC \cite{10.1145/3485447.3512069}. Finally, MLP is used as our \underline{supervised baseline}, which is trained using the node attributes.


\begin{table*}
\setlength{\abovecaptionskip}{2pt}   
\setlength{\belowcaptionskip}{2pt}
\caption{Performance of all methods for node classification averaged over different numbers of convolution layers (i.e., $\{2,5,10,20\}$). The "*" symbol indicates that we ignored the memory overflow situation of the corresponding model. The best results are marked in bold and the second best are underlined. For more detailed results, please refer to Table \ref{fig:ideasss2} in the Appendix. 
}
\label{tab:main_results}
\centering
\resizebox{0.95\textwidth}{!}{
\begin{tabular}{l|ccc|cccccc}
\toprule
\multirow{2}{*}{\textbf{Model}} & \multicolumn{3}{c|}{\textbf{Assortative Graphs} (\%)}                                                  & \multicolumn{6}{c}{\textbf{Disassortative Graphs} (\%)}                                                                                                                                                                 \\ \cline{2-10} 
                                & \multicolumn{1}{c|}{\textbf{Cora}}  & \multicolumn{1}{c|}{\textbf{Cite.}} & \textbf{Pubm.} & \multicolumn{1}{c|}{\textbf{Actor}} & \multicolumn{1}{c|}{\textbf{Corn.}} & \multicolumn{1}{c|}{\textbf{Texas}} & \multicolumn{1}{c|}{\textbf{Wisc.}} & \multicolumn{1}{c|}{\textbf{Cham.}} & \textbf{Squi.} \\ \midrule
MLP & 37.46$\pm$0.62&30.54$\pm$1.20&50.71$\pm$0.90&34.54$\pm$0.36&67.96$\pm$0.90&68.16$\pm$0.15&62.75$\pm$0.37&36.50$\pm$2.01&29.41$\pm$2.73
\\ \midrule
GCN & 48.85$\pm$2.25&40.15$\pm$1.66&55.77$\pm$1.43&27.63$\pm$0.16&59.60$\pm$0.32&61.45$\pm$0.01&51.72$\pm$1.63&34.51$\pm$4.81&23.90$\pm$0.63
 \\
SGC & 65.56$\pm$1.45&52.76$\pm$1.29&71.11$\pm$1.44&28.51$\pm$0.25&61.19$\pm$0.69&63.03$\pm$0.74&55.15$\pm$0.73&45.85$\pm$0.51&28.10$\pm$0.72
\\
GAT & 64.01$\pm$0.42&58.88$\pm$2.66&74.84$\pm$2.01*&30.58$\pm$0.16&63.42$\pm$1.04&66.65$\pm$1.07&60.15$\pm$0.48&60.89$\pm$0.23&37.68$\pm$0.20
 \\
GraphSAGE & 45.26$\pm$0.48&41.97$\pm$0.39&60.34$\pm$0.61&30.19$\pm$0.09&59.41$\pm$0.19&61.78$\pm$1.31&56.91$\pm$1.08&55.14$\pm$0.24&32.48$\pm$0.36
 \\ \midrule
APPNP & 78.37$\pm$0.24&\underline{68.16$\pm$0.35}&75.40$\pm$0.11&27.04$\pm$0.19&59.34$\pm$0.21&60.39$\pm$0.26&53.19$\pm$0.88&39.11$\pm$0.23&21.01$\pm$0.32
 \\
JKNET & 76.44$\pm$0.30&65.47$\pm$0.30&74.80$\pm$0.11&26.75$\pm$0.55&59.34$\pm$0.33&60.99$\pm$0.41&56.76$\pm$0.96&43.94$\pm$0.71&30.31$\pm$0.33
 \\
GCN-BC & 32.30$\pm$15.06&26.47$\pm$14.85&40.27$\pm$20.44&27.38$\pm$10.06&56.58$\pm$26.96&60.00$\pm$28.49&57.25$\pm$33.59&35.41$\pm$14.50&28.88$\pm$8.48
 \\ \midrule
FAGCN & \underline{80.88$\pm$0.25}&66.35$\pm$0.67&\underline{75.51$\pm$0.44}&\underline{39.66$\pm$0.14}&\underline{73.82$\pm$0.68}&\underline{78.62$\pm$0.11}&\underline{78.78$\pm$0.45}&\underline{64.91$\pm$0.44}&\underline{43.41$\pm$0.45}
 \\
PGNN & 60.22$\pm$1.83&42.98$\pm$1.26&61.30$\pm$2.35&29.76$\pm$0.06&59.74$\pm$0.93&62.43$\pm$0.73&55.10$\pm$0.59&41.72$\pm$1.20&34.27$\pm$0.38
 \\
AdaGNN & 44.53$\pm$0.58&34.80$\pm$0.89&53.45$\pm$1.22&34.15$\pm$0.24&64.93$\pm$0.94&70.59$\pm$2.08&66.37$\pm$1.55&52.34$\pm$0.58&38.97$\pm$0.74
\\
AKGNN & 77.81$\pm$1.35&65.84$\pm$0.69&74.10$\pm$1.22&35.19$\pm$0.14&59.21$\pm$0.35&61.84$\pm$0.79&58.92$\pm$0.64&61.28$\pm$0.20&37.31$\pm$0.52
\\ \midrule
CSF & \textbf{81.09$\pm$0.12}&\textbf{68.54$\pm$0.31}&\textbf{76.79$\pm$0.30}&\textbf{48.68$\pm$0.18}&\textbf{84.74$\pm$1.32}&\textbf{88.22$\pm$0.43}&\textbf{87.35$\pm$0.66}&\textbf{72.74$\pm$0.29}&\textbf{59.82$\pm$0.14} \\ \midrule
\textbf{Improvement} & 0.21 $\uparrow$ & 0.38 $\uparrow$ & 1.28 $\uparrow$ & 9.02 $\uparrow$ & 10.92 $\uparrow$ & 9.6 $\uparrow$ & 8.57 $\uparrow$ & 7.83 $\uparrow$ & 16.41 $\uparrow$ \\ 
\bottomrule
\end{tabular}
}
\end{table*}

\noindent\textbf{Implementation Details}. The codes for most baseline methods are publicly available online.
Regarding the MLP, each layer is implemented by a fully connected network with ReLu activation and dropout. 
In the case of our proposed CSF, we utilized node attributes to construct a KNN graph with the top-20 neighbors\footnote{Refer to Appendix \ref{robu} for the robustness analysis on the KNN graph.}. To calculate edge similarity among nodes, we used a Gaussian kernel with the kernel bandwidth set to the squared average distance among nodes, without further tuning. 
We set $a_2=0.1$ for the assortative graphs, $a_2=100$ for the disassortative graphs, and $a_3=1$ for all graphs (cf. Appendix \ref{paraan} for the analysis on $a_2$ and $a_3$).
To simulate the deep GNNs, we tune the number of convolution layers of each method (i.e., $\{2,5,10,20\}$).
Moreover, unlike FAGCN\cite{bo2021beyond}, we do not tune the hidden size on each dataset. Instead, we followed previous work \cite{wang2021confident, kang2022jurygcn, liu2022confidence} and set the hidden size to 16 for all methods. 
We also note that the learning rate in previous studies is either too large (e.g., \citep{li2022g} directly uses $\{0.5\}$) or too small (e.g., \citep{bo2021beyond} tunes the learning rate in $\{0.01, 0.005\}$). Therefore, we tune the learning rate to fully evaluate the performance of all methods when the learning rate is large or small, namely $\{0.03, 0.02, 0.01, 0.005, 0.1, 0.2, 0.3, 0.4, 0.5\}$.
Finally, all experiments were conducted on a machine with an Intel(R) Core(TM) i5-12400F, 16GB memory, and GeForce RTX 3060. We implemented all methods in Pytorch and optimized them using the Adam optimizer with a dropout rate of 0.5. Finally, we ran 150 epochs and selected the model with the highest validation accuracy for testing. We report the mean and variance of the results over ten runs, where all experiments were conducted under the same fixed random seed to ensure that different methods were performed with the same labeled data in each run.

\subsection{Main results}
\label{os}
Previous works show vanilla GCN suffers from the over-smoothing problem once the network goes deep. In this section, we design experiments to evaluate the performances of all methods on various datasets, where the number of layers ranges from $\{2, 5, 10, 20\}$. Due to space limitations, we report detailed results in the Appendix (cf. Table. \ref{fig:ideasss2}). 
Our experimental results, as shown in Fig. \ref{fig:idea87} and Table \ref{tab:main_results}, indicate that \textbf{our alleviates the over-smoothing problem while at the same time promoting the effectiveness of deep GCNs especially when dealing with disassortative graphs.} 
%

Specifically, Fig. \ref{fig:idea87} illustrates the performance of CSF and existing adaptive filter methods in terms of alleviating the over-smoothing problem.
The results indicate that CSF remains effective even with increasing model layers and shows a trend of increasing effectiveness on some datasets (such as Cham., Squi., Pumb.). In contrast, we observe that AdaGNN and PGNN may not fully address the over-smoothing issue as their effectiveness tends to decrease with the increase of model layers. Compared to the remaining two baselines (i.e., FAGNN and AKGNN), although CSF only achieves comparative performance on Cora data, it outperforms FAGNN and AKGNN in most cases, especially when dealing with the disassortative graphs. These results highlight the importance of alleviating the over-smoothing problem while at the same time promoting the effectiveness of deep GCNs.
For more quantified results, we summarize the experimental results of all models in Table \ref{tab:main_results}, averaged over the number of layers. 
The results indicate that CSF significantly outperforms other comparative methods on all datasets, especially on disassortative graphs. This aligns with previous research that suggests GCN and its variants perform poorly on the disassortative graphs \cite{jin2021node} because their nodes tend to connect to others with dissimilar properties, making topology information unreliable for downstream tasks. Therefore, combining information from both topology and attribute is necessary. Although current works attempt to integrate both factors, such as JKNet's jumping knowledge, or APPNP's and GCN-BC's hidden representation to improve representation capabilities, they still underperform CSF. This implies that extracting information from node attributes in the form of a high-pass filter is a better solution. It not only enhances model performance but also alleviates the over-smoothing problem.

\begin{figure}[t]
\setlength{\abovecaptionskip}{2pt}   
\setlength{\belowcaptionskip}{2pt}
\centering
\includegraphics[width=0.45\textwidth]{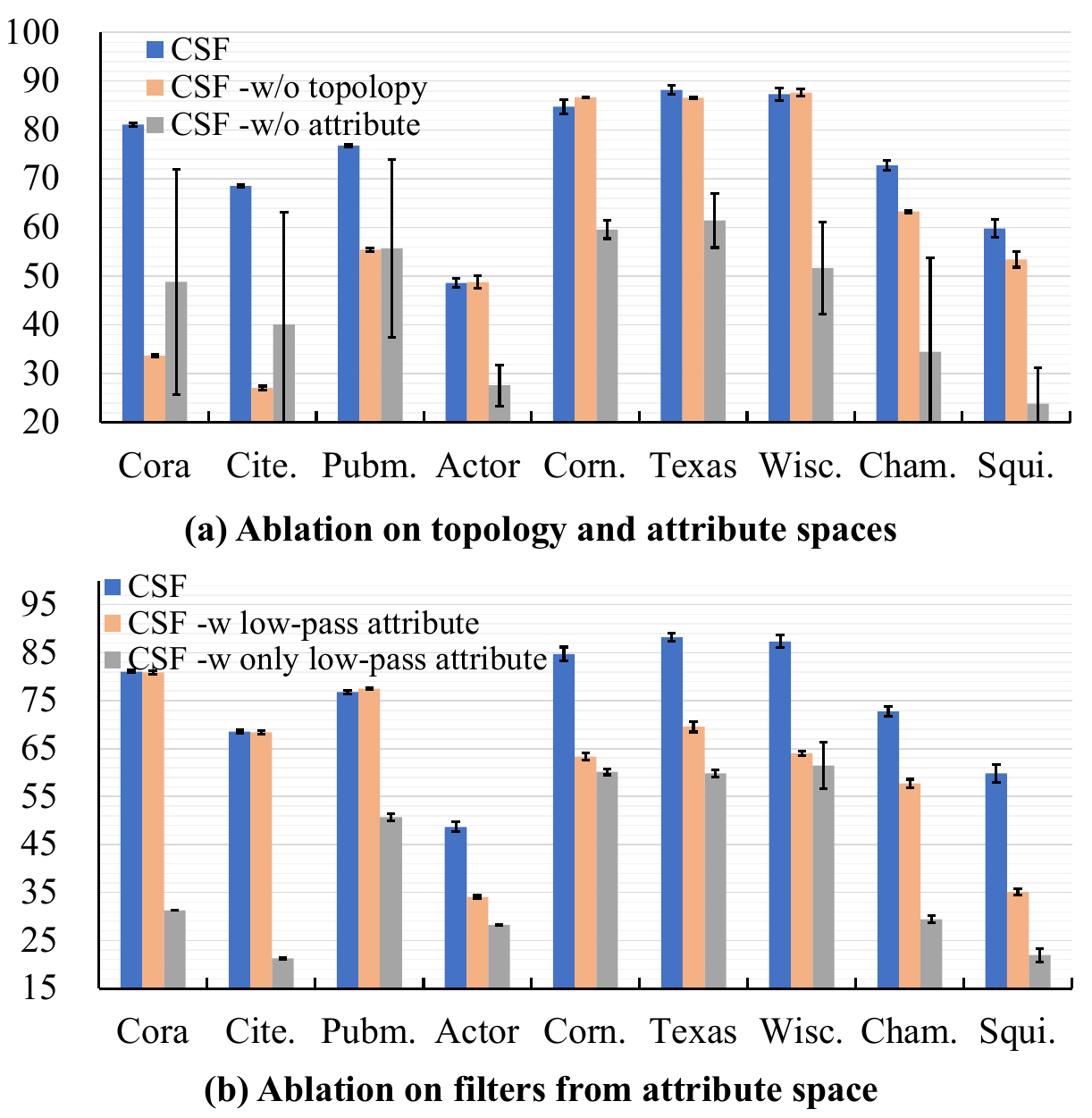}
\caption {Ablation studies on node attribute space. We report the averaged performance across various number of layers. 
}
\label{fig:idea87ddddd3d}
\vspace{-5mm}
\end{figure}

\vspace{-2mm}
\subsection{Ablation studies}
\label{abal}
We further conduct an in-depth analysis to reveal the characteristics of node attributes, as illustrated in Fig. \ref{fig:idea87ddddd3d}. 

\textbf{For assortative graphs, topology information is more important, whereas, for disassortative graphs, node attribute information is more valuable. CSF effectively balances the two}. 
To evaluate the impact of different spaces, we eliminate the attribute filter (i.e., \textit{CSF -w/o attribute}) and topology filter (i.e., \textit{CSF -w/o topology}) from $\mathbb{K}$ individually.
The results in Fig. \ref{fig:idea87ddddd3d}(a) show that topology information dominates attribute information on assortative graphs, while the opposite is true for disassortative graphs. This leads to two observations: firstly, our proposed high-pass filter successfully extracts attribute information, leading to better performance on disassortative graphs than topology-based filters. Secondly, the MKL method used in CSF integrates the advantages of both spaces and achieves better overall performance.

\textbf{High-frequency information in attribute space is more valuable than low-frequency information}.
We first introduce a baseline \textit{CSF -w low-pass attribute} that involves replacing the high-pass filter $K_{attr}$ with the vanilla GCN filter performed on the attribute-based graph. 
In this case, two low-pass filters from different spaces are used to build $\mathbb{K}$. For comparison, we then introduce a baseline \textit{CSF -w only low-pass attribute} that uses the same low-pass attribute-based filter alone as $\mathbb{K}$. 
As demonstrated in Figure \ref{fig:idea87ddddd3d}(b), it leads to two observations. 1) Comparing \textit{CSF -w low-pass attribute} with \textit{CSF -w only low-pass attribute} and \textit{CSF -w/o attribute}, fusing filters from two spaces still has an effective gain even if they are all low-pass filters. 2) Comparing CSF with \textit{CSF -w low-pass attribute}, it highlights the high-frequency information in attribute space, securing the superiority of our CSF.

\section{Conclusion}
Alleviating the over-smoothing problem while at the same time promoting the effectiveness power is known to be important for applying deep GCNs to downstream tasks.
Existing methods fail on this challenge due to heavily relying on graph topology and overlooking the correlation information in node attributes.
To torch this challenge, we take the first step to propose a high-pass attribute-based filter, which is interpreted as a minimizer of semi-supervised kernel ridge regression. More importantly, for the first time, we propose a cross-space adaptive filter arising from a Mercer's kernel. Such a filter integrates information across both the topology and attribute spaces, resulting in superior robustness to the over-smoothing problem and promoting the effectiveness of deep GCN on downstream tasks, as demonstrated in our experiments.

We would like to emphasize that our proposed method provides a new perspective for the GCN community. It provides insight into revisiting the role of node attributes and kernels in alleviating the over-smoothing problem. Additionally, its potential value lies in offering a more convenient and interpretable way to design customized spectral filters and integrate them together, regardless of their space sources. 


\section*{Acknowledgements}
This work was supported in part by the National Natural Science Foundation of China (No. 62272330); in part by the Fundamental Research Funds for the Central Universities (No. YJ202219).


\bibliographystyle{ACM-Reference-Format}
\bibliography{sample-base}


\begin{thebibliography}{54}


\ifx \showCODEN    \undefined \def \showCODEN     #1{\unskip}     \fi
\ifx \showDOI      \undefined \def \showDOI       #1{#1}\fi
\ifx \showISBNx    \undefined \def \showISBNx     #1{\unskip}     \fi
\ifx \showISBNxiii \undefined \def \showISBNxiii  #1{\unskip}     \fi
\ifx \showISSN     \undefined \def \showISSN      #1{\unskip}     \fi
\ifx \showLCCN     \undefined \def \showLCCN      #1{\unskip}     \fi
\ifx \shownote     \undefined \def \shownote      #1{#1}          \fi
\ifx \showarticletitle \undefined \def \showarticletitle #1{#1}   \fi
\ifx \showURL      \undefined \def \showURL       {\relax}        \fi
\providecommand\bibfield[2]{#2}
\providecommand\bibinfo[2]{#2}
\providecommand\natexlab[1]{#1}
\providecommand\showeprint[2][]{arXiv:#2}

\bibitem[Balcilar et~al\mbox{.}(2021)]%
        {balcilar2021analyzing}
\bibfield{author}{\bibinfo{person}{Muhammet Balcilar}, \bibinfo{person}{Renton Guillaume}, \bibinfo{person}{Pierre H{\'e}roux}, \bibinfo{person}{Benoit Ga{\"u}z{\`e}re}, \bibinfo{person}{S{\'e}bastien Adam}, {and} \bibinfo{person}{Paul Honeine}.} \bibinfo{year}{2021}\natexlab{}.
\newblock \showarticletitle{Analyzing the Expressive Power of Graph Neural Networks in a Spectral Perspective}. In \bibinfo{booktitle}{\emph{ICLR}}.
\newblock


\bibitem[Bo et~al\mbox{.}(2021)]%
        {bo2021beyond}
\bibfield{author}{\bibinfo{person}{Deyu Bo}, \bibinfo{person}{Xiao Wang}, \bibinfo{person}{Chuan Shi}, {and} \bibinfo{person}{Huawei Shen}.} \bibinfo{year}{2021}\natexlab{}.
\newblock \showarticletitle{Beyond Low-frequency Information in Graph Convolutional Networks}. In \bibinfo{booktitle}{\emph{AAAI}}, Vol.~\bibinfo{volume}{35}. \bibinfo{pages}{3950--3957}.
\newblock


\bibitem[Chien et~al\mbox{.}(2020)]%
        {chien2020adaptive}
\bibfield{author}{\bibinfo{person}{Eli Chien}, \bibinfo{person}{Jianhao Peng}, \bibinfo{person}{Pan Li}, {and} \bibinfo{person}{Olgica Milenkovic}.} \bibinfo{year}{2020}\natexlab{}.
\newblock \showarticletitle{Adaptive Universal Generalized PageRank Graph Neural Network}. In \bibinfo{booktitle}{\emph{International Conference on Learning Representations}}.
\newblock


\bibitem[Cutajar et~al\mbox{.}(2017)]%
        {cutajar2017random}
\bibfield{author}{\bibinfo{person}{Kurt Cutajar}, \bibinfo{person}{Edwin~V Bonilla}, \bibinfo{person}{Pietro Michiardi}, {and} \bibinfo{person}{Maurizio Filippone}.} \bibinfo{year}{2017}\natexlab{}.
\newblock \showarticletitle{Random feature expansions for deep Gaussian processes}. In \bibinfo{booktitle}{\emph{International Conference on Machine Learning}}. PMLR, \bibinfo{pages}{884--893}.
\newblock


\bibitem[de~Diego et~al\mbox{.}(2010)]%
        {de2010methods}
\bibfield{author}{\bibinfo{person}{Isaac~Mart{\'\i}n de Diego}, \bibinfo{person}{Alberto Munoz}, {and} \bibinfo{person}{Javier~M Moguerza}.} \bibinfo{year}{2010}\natexlab{}.
\newblock \showarticletitle{Methods for the combination of kernel matrices within a support vector framework}.
\newblock \bibinfo{journal}{\emph{Machine learning}} \bibinfo{volume}{78}, \bibinfo{number}{1} (\bibinfo{year}{2010}), \bibinfo{pages}{137--174}.
\newblock


\bibitem[Diego et~al\mbox{.}(2004)]%
        {diego2004combining}
\bibfield{author}{\bibinfo{person}{Isaac Martin~de Diego}, \bibinfo{person}{Javier~M Moguerza}, {and} \bibinfo{person}{Alberto Munoz}.} \bibinfo{year}{2004}\natexlab{}.
\newblock \showarticletitle{Combining kernel information for support vector classification}. In \bibinfo{booktitle}{\emph{International Workshop on Multiple Classifier Systems}}. Springer, \bibinfo{pages}{102--111}.
\newblock


\bibitem[Dong et~al\mbox{.}(2021)]%
        {10.1145/3459637.3482226}
\bibfield{author}{\bibinfo{person}{Yushun Dong}, \bibinfo{person}{Kaize Ding}, \bibinfo{person}{Brian Jalaian}, \bibinfo{person}{Shuiwang Ji}, {and} \bibinfo{person}{Jundong Li}.} \bibinfo{year}{2021}\natexlab{}.
\newblock \showarticletitle{AdaGNN: Graph Neural Networks with Adaptive Frequency Response Filter}. In \bibinfo{booktitle}{\emph{Proceedings of the 30th ACM International Conference on Information \& Knowledge Management}}. \bibinfo{pages}{392--401}.
\newblock


\bibitem[Foster et~al\mbox{.}(2010)]%
        {foster2010edge}
\bibfield{author}{\bibinfo{person}{Jacob~G Foster}, \bibinfo{person}{David~V Foster}, \bibinfo{person}{Peter Grassberger}, {and} \bibinfo{person}{Maya Paczuski}.} \bibinfo{year}{2010}\natexlab{}.
\newblock \showarticletitle{Edge direction and the structure of networks}.
\newblock \bibinfo{journal}{\emph{Proceedings of the National Academy of Sciences}} \bibinfo{volume}{107}, \bibinfo{number}{24} (\bibinfo{year}{2010}), \bibinfo{pages}{10815--10820}.
\newblock


\bibitem[Fourdrinier et~al\mbox{.}(2018)]%
        {fourdrinier2018shrinkage}
\bibfield{author}{\bibinfo{person}{Dominique Fourdrinier}, \bibinfo{person}{William~E Strawderman}, {and} \bibinfo{person}{Martin~T Wells}.} \bibinfo{year}{2018}\natexlab{}.
\newblock \bibinfo{booktitle}{\emph{Shrinkage estimation}}.
\newblock \bibinfo{publisher}{Springer}.
\newblock


\bibitem[Friedman et~al\mbox{.}(2001)]%
        {friedman2001elements}
\bibfield{author}{\bibinfo{person}{Jerome Friedman}, \bibinfo{person}{Trevor Hastie}, \bibinfo{person}{Robert Tibshirani}, {et~al\mbox{.}}} \bibinfo{year}{2001}\natexlab{}.
\newblock \bibinfo{booktitle}{\emph{The elements of statistical learning}}. Vol.~\bibinfo{volume}{1}.
\newblock \bibinfo{publisher}{Springer series in statistics New York}.
\newblock


\bibitem[Fu et~al\mbox{.}(2022)]%
        {fu2021p}
\bibfield{author}{\bibinfo{person}{Guoji Fu}, \bibinfo{person}{Peilin Zhao}, {and} \bibinfo{person}{Yatao Bian}.} \bibinfo{year}{2022}\natexlab{}.
\newblock \showarticletitle{$p$-{L}aplacian Based Graph Neural Networks}. In \bibinfo{booktitle}{\emph{Proceedings of the 39th International Conference on Machine Learning}} \emph{(\bibinfo{series}{Proceedings of Machine Learning Research}, Vol.~\bibinfo{volume}{162})}, \bibfield{editor}{\bibinfo{person}{Kamalika Chaudhuri}, \bibinfo{person}{Stefanie Jegelka}, \bibinfo{person}{Le~Song}, \bibinfo{person}{Csaba Szepesvari}, \bibinfo{person}{Gang Niu}, {and} \bibinfo{person}{Sivan Sabato}} (Eds.). \bibinfo{publisher}{PMLR}, \bibinfo{pages}{6878--6917}.
\newblock
\urldef\tempurl%
\url{https://proceedings.mlr.press/v162/fu22e.html}
\showURL{%
\tempurl}


\bibitem[Fujiwara and Irie(2014)]%
        {pmlr-v32-fujiwara14}
\bibfield{author}{\bibinfo{person}{Yasuhiro Fujiwara} {and} \bibinfo{person}{Go Irie}.} \bibinfo{year}{2014}\natexlab{}.
\newblock \showarticletitle{Efficient Label Propagation}. In \bibinfo{booktitle}{\emph{ICML}} \emph{(\bibinfo{series}{Proceedings of Machine Learning Research}, Vol.~\bibinfo{volume}{32})}. \bibinfo{publisher}{PMLR}, \bibinfo{address}{Bejing, China}, \bibinfo{pages}{784--792}.
\newblock


\bibitem[Gao et~al\mbox{.}(2021)]%
        {gao2021message}
\bibfield{author}{\bibinfo{person}{Xing Gao}, \bibinfo{person}{Wenrui Dai}, \bibinfo{person}{Chenglin Li}, \bibinfo{person}{Junni Zou}, \bibinfo{person}{Hongkai Xiong}, {and} \bibinfo{person}{Pascal Frossard}.} \bibinfo{year}{2021}\natexlab{}.
\newblock \showarticletitle{Message Passing in Graph Convolution Networks via Adaptive Filter Banks}.
\newblock \bibinfo{journal}{\emph{arXiv preprint arXiv:2106.09910}} (\bibinfo{year}{2021}).
\newblock


\bibitem[Gasteiger et~al\mbox{.}(2018)]%
        {gasteiger2018predict}
\bibfield{author}{\bibinfo{person}{Johannes Gasteiger}, \bibinfo{person}{Aleksandar Bojchevski}, {and} \bibinfo{person}{Stephan G{\"u}nnemann}.} \bibinfo{year}{2018}\natexlab{}.
\newblock \showarticletitle{Predict then Propagate: Graph Neural Networks meet Personalized PageRank}. In \bibinfo{booktitle}{\emph{ICLR}}.
\newblock


\bibitem[G{\"o}nen and Alpayd{\i}n(2011)]%
        {gonen2011multiple}
\bibfield{author}{\bibinfo{person}{Mehmet G{\"o}nen} {and} \bibinfo{person}{Ethem Alpayd{\i}n}.} \bibinfo{year}{2011}\natexlab{}.
\newblock \showarticletitle{Multiple kernel learning algorithms}.
\newblock \bibinfo{journal}{\emph{The Journal of Machine Learning Research}}  \bibinfo{volume}{12} (\bibinfo{year}{2011}), \bibinfo{pages}{2211--2268}.
\newblock


\bibitem[Hamilton et~al\mbox{.}(2017)]%
        {hamilton2017inductive}
\bibfield{author}{\bibinfo{person}{William~L. Hamilton}, \bibinfo{person}{Rex Ying}, {and} \bibinfo{person}{Jure Leskovec}.} \bibinfo{year}{2017}\natexlab{}.
\newblock \showarticletitle{Inductive Representation Learning on Large Graphs}. In \bibinfo{booktitle}{\emph{NIPS}}.
\newblock


\bibitem[He and Zhang(2018)]%
        {he2018kernel}
\bibfield{author}{\bibinfo{person}{Li He} {and} \bibinfo{person}{Hong Zhang}.} \bibinfo{year}{2018}\natexlab{}.
\newblock \showarticletitle{Kernel K-means sampling for Nystr{\"o}m approximation}.
\newblock \bibinfo{journal}{\emph{IEEE Transactions on Image Processing}} \bibinfo{volume}{27}, \bibinfo{number}{5} (\bibinfo{year}{2018}), \bibinfo{pages}{2108--2120}.
\newblock


\bibitem[He et~al\mbox{.}(2022)]%
        {he2022convolutional}
\bibfield{author}{\bibinfo{person}{Mingguo He}, \bibinfo{person}{Zhewei Wei}, {and} \bibinfo{person}{Ji-Rong Wen}.} \bibinfo{year}{2022}\natexlab{}.
\newblock \showarticletitle{Convolutional neural networks on graphs with chebyshev approximation, revisited}.
\newblock \bibinfo{journal}{\emph{Advances in Neural Information Processing Systems}}  \bibinfo{volume}{35} (\bibinfo{year}{2022}), \bibinfo{pages}{7264--7276}.
\newblock


\bibitem[Huang et~al\mbox{.}(2020)]%
        {huang2020deep}
\bibfield{author}{\bibinfo{person}{Kaixuan Huang}, \bibinfo{person}{Yuqing Wang}, \bibinfo{person}{Molei Tao}, {and} \bibinfo{person}{Tuo Zhao}.} \bibinfo{year}{2020}\natexlab{}.
\newblock \showarticletitle{Why Do Deep Residual Networks Generalize Better than Deep Feedforward Networks?---A Neural Tangent Kernel Perspective}.
\newblock \bibinfo{journal}{\emph{Advances in neural information processing systems}}  \bibinfo{volume}{33} (\bibinfo{year}{2020}), \bibinfo{pages}{2698--2709}.
\newblock


\bibitem[Jin et~al\mbox{.}(2021)]%
        {jin2021node}
\bibfield{author}{\bibinfo{person}{Wei Jin}, \bibinfo{person}{Tyler Derr}, \bibinfo{person}{Yiqi Wang}, \bibinfo{person}{Yao Ma}, \bibinfo{person}{Zitao Liu}, {and} \bibinfo{person}{Jiliang Tang}.} \bibinfo{year}{2021}\natexlab{}.
\newblock \showarticletitle{Node similarity preserving graph convolutional networks}. In \bibinfo{booktitle}{\emph{WSDM}}. \bibinfo{pages}{148--156}.
\newblock


\bibitem[Ju et~al\mbox{.}(2022)]%
        {ju2022kernel}
\bibfield{author}{\bibinfo{person}{Mingxuan Ju}, \bibinfo{person}{Shifu Hou}, \bibinfo{person}{Yujie Fan}, \bibinfo{person}{Jianan Zhao}, \bibinfo{person}{Liang Zhao}, {and} \bibinfo{person}{Yanfang Ye}.} \bibinfo{year}{2022}\natexlab{}.
\newblock \showarticletitle{Adaptive Kernel Graph Neural Network}.
\newblock \bibinfo{journal}{\emph{AAAI}} (\bibinfo{year}{2022}).
\newblock


\bibitem[Kang et~al\mbox{.}(2022)]%
        {kang2022jurygcn}
\bibfield{author}{\bibinfo{person}{Jian Kang}, \bibinfo{person}{Qinghai Zhou}, {and} \bibinfo{person}{Hanghang Tong}.} \bibinfo{year}{2022}\natexlab{}.
\newblock \showarticletitle{JuryGCN: Quantifying Jackknife Uncertainty on Graph Convolutional Networks}. In \bibinfo{booktitle}{\emph{KDD}}. \bibinfo{pages}{742--752}.
\newblock


\bibitem[Kipf and Welling(2016)]%
        {kipf2016semi}
\bibfield{author}{\bibinfo{person}{Thomas~N Kipf} {and} \bibinfo{person}{Max Welling}.} \bibinfo{year}{2016}\natexlab{}.
\newblock \showarticletitle{Semi-supervised classification with graph convolutional networks}.
\newblock \bibinfo{journal}{\emph{arXiv preprint arXiv:1609.02907}} (\bibinfo{year}{2016}).
\newblock


\bibitem[Kipf and Welling(2017)]%
        {kipf2017semi}
\bibfield{author}{\bibinfo{person}{Thomas~N. Kipf} {and} \bibinfo{person}{Max Welling}.} \bibinfo{year}{2017}\natexlab{}.
\newblock \showarticletitle{Semi-Supervised Classification with Graph Convolutional Networks}. In \bibinfo{booktitle}{\emph{ICLR}}.
\newblock


\bibitem[Kveton et~al\mbox{.}(2010)]%
        {kveton2010semi}
\bibfield{author}{\bibinfo{person}{Branislav Kveton}, \bibinfo{person}{Michal Valko}, \bibinfo{person}{Ali Rahimi}, {and} \bibinfo{person}{Ling Huang}.} \bibinfo{year}{2010}\natexlab{}.
\newblock \showarticletitle{Semi-supervised learning with max-margin graph cuts}. In \bibinfo{booktitle}{\emph{AISTATS}}. JMLR Workshop and Conference Proceedings, \bibinfo{pages}{421--428}.
\newblock


\bibitem[Li et~al\mbox{.}(2022)]%
        {li2022g}
\bibfield{author}{\bibinfo{person}{Mingjie Li}, \bibinfo{person}{Xiaojun Guo}, \bibinfo{person}{Yifei Wang}, \bibinfo{person}{Yisen Wang}, {and} \bibinfo{person}{Zhouchen Lin}.} \bibinfo{year}{2022}\natexlab{}.
\newblock \showarticletitle{G2CN: Graph Gaussian Convolution Networks with Concentrated Graph Filters}. In \bibinfo{booktitle}{\emph{ICML}}. PMLR, \bibinfo{pages}{12782--12796}.
\newblock


\bibitem[Li et~al\mbox{.}(2019)]%
        {li2019label}
\bibfield{author}{\bibinfo{person}{Qimai Li}, \bibinfo{person}{Xiao-Ming Wu}, \bibinfo{person}{Han Liu}, \bibinfo{person}{Xiaotong Zhang}, {and} \bibinfo{person}{Zhichao Guan}.} \bibinfo{year}{2019}\natexlab{}.
\newblock \showarticletitle{Label efficient semi-supervised learning via graph filtering}. In \bibinfo{booktitle}{\emph{CVPR}}. \bibinfo{pages}{9582--9591}.
\newblock


\bibitem[Liu et~al\mbox{.}(2022)]%
        {liu2022confidence}
\bibfield{author}{\bibinfo{person}{Hongrui Liu}, \bibinfo{person}{Binbin Hu}, \bibinfo{person}{Xiao Wang}, \bibinfo{person}{Chuan Shi}, \bibinfo{person}{Zhiqiang Zhang}, {and} \bibinfo{person}{Jun Zhou}.} \bibinfo{year}{2022}\natexlab{}.
\newblock \showarticletitle{Confidence May Cheat: Self-Training on Graph Neural Networks under Distribution Shift}. In \bibinfo{booktitle}{\emph{WWW}}. \bibinfo{pages}{1248--1258}.
\newblock


\bibitem[Lu et~al\mbox{.}(2016)]%
        {lu2016large}
\bibfield{author}{\bibinfo{person}{Jing Lu}, \bibinfo{person}{Steven~CH Hoi}, \bibinfo{person}{Jialei Wang}, \bibinfo{person}{Peilin Zhao}, {and} \bibinfo{person}{Zhi-Yong Liu}.} \bibinfo{year}{2016}\natexlab{}.
\newblock \showarticletitle{Large scale online kernel learning}.
\newblock \bibinfo{journal}{\emph{Journal of Machine Learning Research}} \bibinfo{volume}{17}, \bibinfo{number}{47} (\bibinfo{year}{2016}), \bibinfo{pages}{1}.
\newblock


\bibitem[Min et~al\mbox{.}(2020)]%
        {min2020scattering}
\bibfield{author}{\bibinfo{person}{Yimeng Min}, \bibinfo{person}{Frederik Wenkel}, {and} \bibinfo{person}{Guy Wolf}.} \bibinfo{year}{2020}\natexlab{}.
\newblock \showarticletitle{Scattering GCN: Overcoming Oversmoothness in Graph Convolutional Networks}.
\newblock \bibinfo{journal}{\emph{NeurIPS}}  \bibinfo{volume}{33} (\bibinfo{year}{2020}).
\newblock


\bibitem[Muandet et~al\mbox{.}(2014a)]%
        {muandet2014kernel}
\bibfield{author}{\bibinfo{person}{Krikamol Muandet}, \bibinfo{person}{Kenji Fukumizu}, \bibinfo{person}{Bharath Sriperumbudur}, \bibinfo{person}{Arthur Gretton}, {and} \bibinfo{person}{Bernhard Sch{\"o}lkopf}.} \bibinfo{year}{2014}\natexlab{a}.
\newblock \showarticletitle{Kernel mean estimation and Stein effect}. In \bibinfo{booktitle}{\emph{ICML}}. PMLR, \bibinfo{pages}{10--18}.
\newblock


\bibitem[Muandet et~al\mbox{.}(2014b)]%
        {muandet2014kernel2}
\bibfield{author}{\bibinfo{person}{Krikamol Muandet}, \bibinfo{person}{Bharath Sriperumbudur}, {and} \bibinfo{person}{Bernhard Sch{\"o}lkopf}.} \bibinfo{year}{2014}\natexlab{b}.
\newblock \showarticletitle{Kernel mean estimation via spectral filtering}.
\newblock \bibinfo{journal}{\emph{NeurIPS}}  \bibinfo{volume}{27} (\bibinfo{year}{2014}).
\newblock


\bibitem[Rahimi and Recht(2007)]%
        {rahimi2007random}
\bibfield{author}{\bibinfo{person}{Ali Rahimi} {and} \bibinfo{person}{Benjamin Recht}.} \bibinfo{year}{2007}\natexlab{}.
\newblock \showarticletitle{Random features for large-scale kernel machines}.
\newblock \bibinfo{journal}{\emph{Advances in neural information processing systems}}  \bibinfo{volume}{20} (\bibinfo{year}{2007}).
\newblock


\bibitem[Rusch et~al\mbox{.}(2023)]%
        {rusch2023survey}
\bibfield{author}{\bibinfo{person}{T~Konstantin Rusch}, \bibinfo{person}{Michael~M Bronstein}, {and} \bibinfo{person}{Siddhartha Mishra}.} \bibinfo{year}{2023}\natexlab{}.
\newblock \showarticletitle{A survey on oversmoothing in graph neural networks}.
\newblock \bibinfo{journal}{\emph{arXiv preprint arXiv:2303.10993}} (\bibinfo{year}{2023}).
\newblock


\bibitem[Sch{\"o}lkopf et~al\mbox{.}(2001)]%
        {scholkopf2001generalized}
\bibfield{author}{\bibinfo{person}{Bernhard Sch{\"o}lkopf}, \bibinfo{person}{Ralf Herbrich}, {and} \bibinfo{person}{Alex~J Smola}.} \bibinfo{year}{2001}\natexlab{}.
\newblock \showarticletitle{A generalized representer theorem}. In \bibinfo{booktitle}{\emph{COLT}}. Springer, \bibinfo{pages}{416--426}.
\newblock


\bibitem[Sen et~al\mbox{.}(2008)]%
        {sen2008collective}
\bibfield{author}{\bibinfo{person}{Prithviraj Sen}, \bibinfo{person}{Galileo Namata}, \bibinfo{person}{Mustafa Bilgic}, \bibinfo{person}{Lise Getoor}, \bibinfo{person}{Brian Galligher}, {and} \bibinfo{person}{Tina Eliassi-Rad}.} \bibinfo{year}{2008}\natexlab{}.
\newblock \showarticletitle{Collective classification in network data}.
\newblock \bibinfo{journal}{\emph{AI magazine}} \bibinfo{volume}{29}, \bibinfo{number}{3} (\bibinfo{year}{2008}), \bibinfo{pages}{93--93}.
\newblock


\bibitem[Shuman et~al\mbox{.}(2013)]%
        {shuman2013emerging}
\bibfield{author}{\bibinfo{person}{David~I Shuman}, \bibinfo{person}{Sunil~K Narang}, \bibinfo{person}{Pascal Frossard}, \bibinfo{person}{Antonio Ortega}, {and} \bibinfo{person}{Pierre Vandergheynst}.} \bibinfo{year}{2013}\natexlab{}.
\newblock \showarticletitle{The emerging field of signal processing on graphs: Extending high-dimensional data analysis to networks and other irregular domains}.
\newblock \bibinfo{journal}{\emph{IEEE signal processing magazine}} \bibinfo{volume}{30}, \bibinfo{number}{3} (\bibinfo{year}{2013}), \bibinfo{pages}{83--98}.
\newblock


\bibitem[Smola and Kondor(2003)]%
        {smola2003kernels}
\bibfield{author}{\bibinfo{person}{Alexander~J Smola} {and} \bibinfo{person}{Risi Kondor}.} \bibinfo{year}{2003}\natexlab{}.
\newblock \showarticletitle{Kernels and regularization on graphs}.
\newblock In \bibinfo{booktitle}{\emph{Learning theory and kernel machines}}. \bibinfo{publisher}{Springer}, \bibinfo{pages}{144--158}.
\newblock


\bibitem[Son et~al\mbox{.}(2021)]%
        {son2021single}
\bibfield{author}{\bibinfo{person}{Hyeongseok Son}, \bibinfo{person}{Junyong Lee}, \bibinfo{person}{Sunghyun Cho}, {and} \bibinfo{person}{Seungyong Lee}.} \bibinfo{year}{2021}\natexlab{}.
\newblock \showarticletitle{Single image defocus deblurring using kernel-sharing parallel atrous convolutions}. In \bibinfo{booktitle}{\emph{ICCV}}. \bibinfo{pages}{2642--2650}.
\newblock


\bibitem[Tang et~al\mbox{.}(2009)]%
        {tang2009social}
\bibfield{author}{\bibinfo{person}{Jie Tang}, \bibinfo{person}{Jimeng Sun}, \bibinfo{person}{Chi Wang}, {and} \bibinfo{person}{Zi Yang}.} \bibinfo{year}{2009}\natexlab{}.
\newblock \showarticletitle{Social influence analysis in large-scale networks}. In \bibinfo{booktitle}{\emph{Proceedings of the 15th ACM SIGKDD international conference on Knowledge discovery and data mining}}. \bibinfo{pages}{807--816}.
\newblock


\bibitem[Veli{\v{c}}kovi{\'c} et~al\mbox{.}(2018)]%
        {velivckovic2018graph}
\bibfield{author}{\bibinfo{person}{Petar Veli{\v{c}}kovi{\'c}}, \bibinfo{person}{Guillem Cucurull}, \bibinfo{person}{Arantxa Casanova}, \bibinfo{person}{Adriana Romero}, \bibinfo{person}{Pietro Li{\`o}}, {and} \bibinfo{person}{Yoshua Bengio}.} \bibinfo{year}{2018}\natexlab{}.
\newblock \showarticletitle{Graph Attention Networks}. In \bibinfo{booktitle}{\emph{ICLR}}.
\newblock


\bibitem[Wang et~al\mbox{.}(2021)]%
        {wang2021confident}
\bibfield{author}{\bibinfo{person}{Xiao Wang}, \bibinfo{person}{Hongrui Liu}, \bibinfo{person}{Chuan Shi}, {and} \bibinfo{person}{Cheng Yang}.} \bibinfo{year}{2021}\natexlab{}.
\newblock \showarticletitle{Be confident! towards trustworthy graph neural networks via confidence calibration}.
\newblock \bibinfo{journal}{\emph{NeurIPS}}  \bibinfo{volume}{34} (\bibinfo{year}{2021}), \bibinfo{pages}{23768--23779}.
\newblock


\bibitem[Williams and Seeger(2000)]%
        {williams2000using}
\bibfield{author}{\bibinfo{person}{Christopher Williams} {and} \bibinfo{person}{Matthias Seeger}.} \bibinfo{year}{2000}\natexlab{}.
\newblock \showarticletitle{Using the Nystr{\"o}m method to speed up kernel machines}.
\newblock \bibinfo{journal}{\emph{Advances in neural information processing systems}}  \bibinfo{volume}{13} (\bibinfo{year}{2000}).
\newblock


\bibitem[Wu et~al\mbox{.}(2019)]%
        {wu2019simplifying}
\bibfield{author}{\bibinfo{person}{Felix Wu}, \bibinfo{person}{Amauri Souza}, \bibinfo{person}{Tianyi Zhang}, \bibinfo{person}{Christopher Fifty}, \bibinfo{person}{Tao Yu}, {and} \bibinfo{person}{Kilian Weinberger}.} \bibinfo{year}{2019}\natexlab{}.
\newblock \showarticletitle{Simplifying graph convolutional networks}. In \bibinfo{booktitle}{\emph{ICML}}. PMLR, \bibinfo{pages}{6861--6871}.
\newblock


\bibitem[Wu et~al\mbox{.}(2022)]%
        {10.1007/s10489-022-03836-2}
\bibfield{author}{\bibinfo{person}{Gongce Wu}, \bibinfo{person}{Shukuan Lin}, \bibinfo{person}{Xiaoxue Shao}, \bibinfo{person}{Peng Zhang}, {and} \bibinfo{person}{Jianzhong Qiao}.} \bibinfo{year}{2022}\natexlab{}.
\newblock \showarticletitle{QPGCN: Graph Convolutional Network with a Quadratic Polynomial Filter for Overcoming over-Smoothing}.
\newblock \bibinfo{journal}{\emph{Applied Intelligence}} \bibinfo{volume}{53}, \bibinfo{number}{6} (\bibinfo{date}{jul} \bibinfo{year}{2022}), \bibinfo{pages}{7216–7231}.
\newblock
\showISSN{0924-669X}
\urldef\tempurl%
\url{https://doi.org/10.1007/s10489-022-03836-2}
\showDOI{\tempurl}


\bibitem[Xu et~al\mbox{.}(2018)]%
        {xu2018representation}
\bibfield{author}{\bibinfo{person}{Keyulu Xu}, \bibinfo{person}{Chengtao Li}, \bibinfo{person}{Yonglong Tian}, \bibinfo{person}{Tomohiro Sonobe}, \bibinfo{person}{Ken-ichi Kawarabayashi}, {and} \bibinfo{person}{Stefanie Jegelka}.} \bibinfo{year}{2018}\natexlab{}.
\newblock \showarticletitle{Representation learning on graphs with jumping knowledge networks}. In \bibinfo{booktitle}{\emph{ICML}}. \bibinfo{pages}{5453--5462}.
\newblock


\bibitem[Yang et~al\mbox{.}(2020)]%
        {yang2020relation}
\bibfield{author}{\bibinfo{person}{Carl Yang}, \bibinfo{person}{Jieyu Zhang}, \bibinfo{person}{Haonan Wang}, \bibinfo{person}{Sha Li}, \bibinfo{person}{Myungwan Kim}, \bibinfo{person}{Matt Walker}, \bibinfo{person}{Yiou Xiao}, {and} \bibinfo{person}{Jiawei Han}.} \bibinfo{year}{2020}\natexlab{}.
\newblock \showarticletitle{Relation learning on social networks with multi-modal graph edge variational autoencoders}. In \bibinfo{booktitle}{\emph{WSDM}}. \bibinfo{pages}{699--707}.
\newblock


\bibitem[Yang et~al\mbox{.}(2022)]%
        {10.1145/3485447.3512069}
\bibfield{author}{\bibinfo{person}{Liang Yang}, \bibinfo{person}{Wenmiao Zhou}, \bibinfo{person}{Weihang Peng}, \bibinfo{person}{Bingxin Niu}, \bibinfo{person}{Junhua Gu}, \bibinfo{person}{Chuan Wang}, \bibinfo{person}{Xiaochun Cao}, {and} \bibinfo{person}{Dongxiao He}.} \bibinfo{year}{2022}\natexlab{}.
\newblock \showarticletitle{Graph Neural Networks Beyond Compromise Between Attribute and Topology}. In \bibinfo{booktitle}{\emph{WWW}} (Virtual Event, Lyon, France). \bibinfo{publisher}{Association for Computing Machinery}, \bibinfo{address}{New York, NY, USA}, \bibinfo{pages}{1127–1135}.
\newblock
\showISBNx{9781450390965}
\urldef\tempurl%
\url{https://doi.org/10.1145/3485447.3512069}
\showDOI{\tempurl}


\bibitem[You et~al\mbox{.}(2020)]%
        {you2020design}
\bibfield{author}{\bibinfo{person}{Jiaxuan You}, \bibinfo{person}{Zhitao Ying}, {and} \bibinfo{person}{Jure Leskovec}.} \bibinfo{year}{2020}\natexlab{}.
\newblock \showarticletitle{Design space for graph neural networks}.
\newblock \bibinfo{journal}{\emph{NeurIPS}}  \bibinfo{volume}{33} (\bibinfo{year}{2020}), \bibinfo{pages}{17009--17021}.
\newblock


\bibitem[Zhao and Akoglu(2019)]%
        {zhao2019pairnorm}
\bibfield{author}{\bibinfo{person}{Lingxiao Zhao} {and} \bibinfo{person}{Leman Akoglu}.} \bibinfo{year}{2019}\natexlab{}.
\newblock \showarticletitle{PairNorm: Tackling Oversmoothing in GNNs}. In \bibinfo{booktitle}{\emph{International Conference on Learning Representations}}.
\newblock


\bibitem[Zhou et~al\mbox{.}(2003)]%
        {zhou2003learning}
\bibfield{author}{\bibinfo{person}{Dengyong Zhou}, \bibinfo{person}{Olivier Bousquet}, \bibinfo{person}{Thomas Lal}, \bibinfo{person}{Jason Weston}, {and} \bibinfo{person}{Bernhard Sch{\"o}lkopf}.} \bibinfo{year}{2003}\natexlab{}.
\newblock \showarticletitle{Learning with local and global consistency}.
\newblock \bibinfo{journal}{\emph{NeurIPS}}  \bibinfo{volume}{16} (\bibinfo{year}{2003}).
\newblock


\bibitem[Zhu and Koniusz(2020)]%
        {zhu2020simple}
\bibfield{author}{\bibinfo{person}{Hao Zhu} {and} \bibinfo{person}{Piotr Koniusz}.} \bibinfo{year}{2020}\natexlab{}.
\newblock \showarticletitle{Simple spectral graph convolution}. In \bibinfo{booktitle}{\emph{ICLR}}.
\newblock


\bibitem[Zhu et~al\mbox{.}(2020)]%
        {zhu2020beyond}
\bibfield{author}{\bibinfo{person}{Jiong Zhu}, \bibinfo{person}{Yujun Yan}, \bibinfo{person}{Lingxiao Zhao}, \bibinfo{person}{Mark Heimann}, \bibinfo{person}{Leman Akoglu}, {and} \bibinfo{person}{Danai Koutra}.} \bibinfo{year}{2020}\natexlab{}.
\newblock \showarticletitle{Beyond homophily in graph neural networks: Current limitations and effective designs}.
\newblock \bibinfo{journal}{\emph{NeurIPS}} (\bibinfo{year}{2020}).
\newblock


\bibitem[Zhu et~al\mbox{.}(2021)]%
        {zhusurvey}
\bibfield{author}{\bibinfo{person}{Yanqiao Zhu}, \bibinfo{person}{Weizhi Xu}, \bibinfo{person}{Jinghao Zhang}, \bibinfo{person}{Yuanqi Du}, \bibinfo{person}{Jieyu Zhang}, \bibinfo{person}{Qiang Liu}, \bibinfo{person}{Carl Yang}, {and} \bibinfo{person}{Shu Wu}.} \bibinfo{year}{2021}\natexlab{}.
\newblock \showarticletitle{A Survey on Graph Structure Learning: Progress and Opportunities}.
\newblock \bibinfo{journal}{\emph{arXiv e-prints}} (\bibinfo{year}{2021}), \bibinfo{pages}{arXiv--2103}.
\newblock


\end{thebibliography}

\appendix

\begin{table}
\centering
\caption{Ablation on the impact of different spaces.}
\label{tab:my-alalalalaee3}
\resizebox{0.45\textwidth}{!}{%
\begin{tabular}{lcccc}
\toprule
\multicolumn{1}{c|}{\multirow{2}{*}{\textbf{Method/Graphs}}} & \multicolumn{4}{c}{\textbf{\# Layers}} \\ \cline{2-5} 
\multicolumn{1}{c|}{} & \multicolumn{1}{l|}{\textbf{2}} & \multicolumn{1}{l|}{\textbf{5}} & \multicolumn{1}{l|}{\textbf{10}} & \textbf{20} \\ \midrule
\multicolumn{5}{c}{\textbf{Cora}} \\ \hline
\multicolumn{1}{l|}{CSF} & \multicolumn{1}{l|}{\textbf{81.45$\pm$1.61}} & \multicolumn{1}{l|}{\textbf{80.88$\pm$1.85}} & \multicolumn{1}{l|}{\textbf{80.8$\pm$1.89}} & \textbf{81.21$\pm$1.75} \\ \hline
\multicolumn{1}{l|}{CSF -w/o topology} & \multicolumn{1}{l|}{33.60$\pm$1.39} & \multicolumn{1}{l|}{34.09$\pm$1.15} & \multicolumn{1}{l|}{\underline{33.88$\pm$1.01}} & \underline{33.5$\pm$1.27} \\ \hline
\multicolumn{1}{l|}{CSF -w/o attribute} & \multicolumn{1}{l|}{\underline{80.03$\pm$2.02}} & \multicolumn{1}{l|}{\underline{52.78$\pm$5.52}} & \multicolumn{1}{l|}{31.3$\pm$0.76} & 31.3$\pm$0.76 \\ \midrule
\multicolumn{5}{c}{\textbf{Cite.}} \\ \hline
\multicolumn{1}{l|}{CSF} & \multicolumn{1}{l|}{\textbf{68.94$\pm$0.84}} & \multicolumn{1}{l|}{\textbf{68.61$\pm$1.31}} & \multicolumn{1}{l|}{\textbf{68.33$\pm$0.85}} & \textbf{68.26$\pm$1.08} \\ \hline
\multicolumn{1}{l|}{CSF -w/o topology} & \multicolumn{1}{l|}{27.22$\pm$1.67} & \multicolumn{1}{l|}{26.52$\pm$2.24} & \multicolumn{1}{l|}{\underline{27.51$\pm$1.74}} & \underline{27.16$\pm$2.06} \\ \hline
\multicolumn{1}{l|}{CSF -w/o attribute} & \multicolumn{1}{l|}{\underline{67.79$\pm$1.22}} & \multicolumn{1}{l|}{\underline{50.43$\pm$4.50}} & \multicolumn{1}{l|}{21.53$\pm$2.02} & 20.83$\pm$0.78 \\ \midrule
\multicolumn{5}{c}{\textbf{Pubm.}} \\ \hline
\multicolumn{1}{l|}{CSF} & \multicolumn{1}{l|}{\textbf{76.41$\pm$1.70}} & \multicolumn{1}{l|}{\textbf{76.98$\pm$1.54}} & \multicolumn{1}{l|}{\textbf{76.72$\pm$1.41}} & \textbf{77.06$\pm$1.35} \\ \hline
\multicolumn{1}{l|}{CSF -w/o topology} & \multicolumn{1}{l|}{55.39$\pm$2.91} & \multicolumn{1}{l|}{55.14$\pm$4.24} & \multicolumn{1}{l|}{\underline{55.9$\pm$2.36}} & \underline{55.47$\pm$3.3} \\ \hline
\multicolumn{1}{l|}{CSF -w/o attribute} & \multicolumn{1}{l|}{\underline{75.01$\pm$1.88}} & \multicolumn{1}{l|}{\underline{67.68$\pm$3.74}} & \multicolumn{1}{l|}{40.49$\pm$1.92} & 39.88$\pm$0.25 \\ \midrule
\multicolumn{5}{c}{\textbf{Actor}} \\ \hline
\multicolumn{1}{l|}{CSF} & \multicolumn{1}{l|}{\textbf{47.26$\pm$1.85}} & \multicolumn{1}{l|}{\underline{49.27$\pm$1.46}} & \multicolumn{1}{l|}{\underline{49.34$\pm$1.48}} & \underline{48.86$\pm$1.66} \\ \hline
\multicolumn{1}{l|}{CSF -w/o topology} & \multicolumn{1}{l|}{\underline{46.89$\pm$0.8}} & \multicolumn{1}{l|}{\textbf{49.54$\pm$0.57}} & \multicolumn{1}{l|}{\textbf{49.38$\pm$0.66}} & \textbf{49.49$\pm$0.85} \\ \hline
\multicolumn{1}{l|}{CSF -w/o attribute} & \multicolumn{1}{l|}{33.99$\pm$1.03} & \multicolumn{1}{l|}{25.7$\pm$1.24} & \multicolumn{1}{l|}{25.41$\pm$1.37} & 25.41$\pm$1.37 \\ \midrule
\multicolumn{5}{c}{\textbf{Corn.}} \\ \hline
\multicolumn{1}{l|}{CSF} & \multicolumn{1}{l|}{\underline{86.58$\pm$5.03}} & \multicolumn{1}{l|}{\underline{83.68$\pm$6.77}} & \multicolumn{1}{l|}{\underline{85.26$\pm$5.98}} & \underline{83.42$\pm$8.14} \\ \hline
\multicolumn{1}{l|}{CSF -w/o topology} & \multicolumn{1}{l|}{\textbf{86.84$\pm$7.13}} & \multicolumn{1}{l|}{\textbf{86.58$\pm$6.96}} & \multicolumn{1}{l|}{\textbf{86.58$\pm$6.26}} & \textbf{86.58$\pm$6.73} \\ \hline
\multicolumn{1}{l|}{CSF -w/o attribute} & \multicolumn{1}{l|}{62.37$\pm$8.78} & \multicolumn{1}{l|}{58.68$\pm$8.14} & \multicolumn{1}{l|}{58.68$\pm$8.14} & 58.68$\pm$8.14 \\ \midrule
\multicolumn{5}{c}{\textbf{Texas}} \\ \hline
\multicolumn{1}{l|}{CSF} & \multicolumn{1}{l|}{\textbf{89.47$\pm$6.2}} & \multicolumn{1}{l|}{\textbf{88.16$\pm$7.15}} & \multicolumn{1}{l|}{\textbf{87.63$\pm$6.57}} & \textbf{87.63$\pm$7.02} \\ \hline
\multicolumn{1}{l|}{CSF -w/o topology} & \multicolumn{1}{l|}{\underline{86.58$\pm$6.26}} & \multicolumn{1}{l|}{\underline{86.58$\pm$6.50}} & \multicolumn{1}{l|}{\underline{86.84$\pm$6.56}} & \underline{86.32$\pm$6.88} \\ \hline
\multicolumn{1}{l|}{CSF -w/o attribute} & \multicolumn{1}{l|}{69.74$\pm$8.16} & \multicolumn{1}{l|}{58.68$\pm$8.14} & \multicolumn{1}{l|}{58.68$\pm$8.14} & 58.68$\pm$8.14 \\ \midrule
\multicolumn{5}{c}{\textbf{Wisc.}} \\ \hline
\multicolumn{1}{l|}{CSF} & \multicolumn{1}{l|}{\textbf{89.22$\pm$6.08}} & \multicolumn{1}{l|}{\textbf{87.25$\pm$6.93}} & \multicolumn{1}{l|}{\underline{86.47$\pm$7.65}} & \underline{86.47$\pm$7.19} \\ \hline
\multicolumn{1}{l|}{CSF -w/o topology} & \multicolumn{1}{l|}{\underline{88.63$\pm$6.39}} & \multicolumn{1}{l|}{\textbf{87.25$\pm$6.75}} & \multicolumn{1}{l|}{\textbf{87.84$\pm$6.46}} & \textbf{87.06$\pm$6.61} \\ \hline
\multicolumn{1}{l|}{CSF -w/o attribute} & \multicolumn{1}{l|}{65.88$\pm$8.63} & \multicolumn{1}{l|}{48.04$\pm$4.74} & \multicolumn{1}{l|}{46.47$\pm$6.06} & 46.47$\pm$6.06 \\ \midrule
\multicolumn{5}{c}{\textbf{Cham.}} \\ \hline
\multicolumn{1}{l|}{CSF} & \multicolumn{1}{l|}{\textbf{71.27$\pm$2.95}} & \multicolumn{1}{l|}{\textbf{73.33$\pm$2.65}} & \multicolumn{1}{l|}{\textbf{73.05$\pm$2.54}} & \textbf{73.31$\pm$2.26} \\ \hline
\multicolumn{1}{l|}{CSF -w/o topology} & \multicolumn{1}{l|}{\underline{62.94$\pm$2.30}} & \multicolumn{1}{l|}{\underline{63.51$\pm$1.85}} & \multicolumn{1}{l|}{\underline{63.29$\pm$2.08}} & \underline{63.38$\pm$1.73} \\ \hline
\multicolumn{1}{l|}{CSF -w/o attribute} & \multicolumn{1}{l|}{62.46$\pm$2.32} & \multicolumn{1}{l|}{32.61$\pm$10.88} & \multicolumn{1}{l|}{21.45$\pm$0.89} & 21.51$\pm$0.84 \\ \midrule
\multicolumn{5}{c}{\textbf{Squi.}} \\ \hline
\multicolumn{1}{l|}{CSF} & \multicolumn{1}{l|}{\textbf{57.03$\pm$1.04}} & \multicolumn{1}{l|}{\textbf{60.64$\pm$1.38}} & \multicolumn{1}{l|}{\textbf{61.04$\pm$1.17}} & \textbf{60.58$\pm$1.17} \\ \hline
\multicolumn{1}{l|}{CSF -w/o topology} & \multicolumn{1}{l|}{\underline{51.05$\pm$1.75}} & \multicolumn{1}{l|}{\underline{54.49$\pm$1.27}} & \multicolumn{1}{l|}{\underline{54.06$\pm$1.52}} & \underline{54.15$\pm$1.30} \\ \hline
\multicolumn{1}{l|}{CSF -w/o attribute} & \multicolumn{1}{l|}{34.97$\pm$2.32} & \multicolumn{1}{l|}{20.12$\pm$1.10} & \multicolumn{1}{l|}{20.17$\pm$1.04} & 20.32$\pm$1.03 \\ \bottomrule
\end{tabular}%
}
\end{table}

\begin{table}
\centering
\caption{Ablation on the impact of different filters from note attribute space.}
\label{tab:my-table323tq34wt}
\resizebox{0.47\textwidth}{!}{%
\begin{tabular}{lcccc}
\toprule
\multicolumn{1}{c|}{\multirow{2}{*}{\textbf{Method/Graphs}}} & \multicolumn{4}{|c}{\textbf{\# Layers}} \\ \cline{2-5} 
\multicolumn{1}{c|}{} & \multicolumn{1}{l|}{\textbf{2}} & \multicolumn{1}{l|}{\textbf{5}} & \multicolumn{1}{l|}{\textbf{10}} & \textbf{20} \\ \midrule
\multicolumn{5}{c}{\textbf{Cora}} \\ \midrule
\multicolumn{1}{l|}{CSF} & \multicolumn{1}{l|}{\textbf{81.45$\pm$1.61}} & \multicolumn{1}{l|}{\textbf{80.88$\pm$1.85}} & \multicolumn{1}{l|}{\textbf{80.8$\pm$1.89}} & \textbf{81.21$\pm$1.75} \\ \hline
\multicolumn{1}{l|}{CSF -w low-pass attribute} & \multicolumn{1}{l|}{\underline{81.43$\pm$1.63}} & \multicolumn{1}{l|}{\underline{80.76$\pm$1.64}} & \multicolumn{1}{l|}{\underline{80.61$\pm$1.95}} & \underline{80.68$\pm$1.82} \\ \hline
\multicolumn{1}{l|}{CSF -w only low-pass attribute} & \multicolumn{1}{l|}{{31.33$\pm$0.79}} & \multicolumn{1}{l|}{{31.37$\pm$0.82}} & \multicolumn{1}{l|}{31.27$\pm$0.82} & 31.27$\pm$0.82 \\ \midrule

\multicolumn{5}{c}{\textbf{Cite.}} \\ \midrule
\multicolumn{1}{l|}{CSF} & \multicolumn{1}{l|}{\textbf{68.94$\pm$0.84}} & \multicolumn{1}{l|}{\textbf{68.61$\pm$1.31}} & \multicolumn{1}{l|}{\textbf{68.33$\pm$0.85}} & \textbf{68.26$\pm$1.08} \\ \hline
\multicolumn{1}{l|}{CSF -w low-pass attribute} & \multicolumn{1}{l|}{\underline{68.84$\pm$0.81}} & \multicolumn{1}{l|}{\underline{68.28$\pm$1.00}} & \multicolumn{1}{l|}{\underline{68.32$\pm$1.07}} & \underline{68.07$\pm$1.31} \\ \hline
\multicolumn{1}{l|}{CSF -w only low-pass attribute} & \multicolumn{1}{l|}{{21.41$\pm$1.08}} & \multicolumn{1}{l|}{{21.43$\pm$1.24}} & \multicolumn{1}{l|}{21.12$\pm$0.89} & 21.15$\pm$0.67 \\ \midrule

\multicolumn{5}{c}{\textbf{Pubm.}} \\ \midrule
\multicolumn{1}{l|}{CSF} & \multicolumn{1}{l|}{\textbf{76.41$\pm$1.70}} & \multicolumn{1}{l|}{\textbf{76.98$\pm$1.54}} & \multicolumn{1}{l|}{\textbf{76.72$\pm$1.41}} & \textbf{77.06$\pm$1.35} \\ \hline
\multicolumn{1}{l|}{CSF -w low-pass attribute} & \multicolumn{1}{l|}{\underline{77.4$\pm$1.86}} & \multicolumn{1}{l|}{\underline{77.64$\pm$1.21}} & \multicolumn{1}{l|}{\underline{77.41$\pm$1.07}} & \underline{77.52$\pm$2.07} \\ \hline
\multicolumn{1}{l|}{CSF -w only low-pass attribute} & \multicolumn{1}{l|}{{50.18$\pm$0.83}} & \multicolumn{1}{l|}{{49.88$\pm$0.77}} & \multicolumn{1}{l|}{51.1$\pm$0.89} & 51.56$\pm$0.57 \\ \midrule

\multicolumn{5}{c}{\textbf{Actor}} \\ \midrule
\multicolumn{1}{l|}{CSF} & \multicolumn{1}{l|}{\textbf{47.26$\pm$1.85}} & \multicolumn{1}{l|}{\textbf{49.27$\pm$1.46}} & \multicolumn{1}{l|}{\textbf{49.34$\pm$1.48}} & \textbf{48.86$\pm$1.66} \\ \hline
\multicolumn{1}{l|}{CSF -w low-pass attribute} & \multicolumn{1}{l|}{\underline{33.61$\pm$0.65}} & \multicolumn{1}{l|}{\underline{34.23$\pm$0.82}} & \multicolumn{1}{l|}{\underline{34.32$\pm$0.92}} & \underline{34.17$\pm$0.75} \\ \hline
\multicolumn{1}{l|}{CSF -w only low-pass attribute} & \multicolumn{1}{l|}{28.25$\pm$0.77} & \multicolumn{1}{l|}{28.28$\pm$0.75} & \multicolumn{1}{l|}{28.18$\pm$0.78} & 28.19$\pm$0.64 \\ \midrule

\multicolumn{5}{c}{\textbf{Corn.}} \\ \midrule
\multicolumn{1}{l|}{CSF} & \multicolumn{1}{l|}{\textbf{86.58$\pm$5.03}} & \multicolumn{1}{l|}{\textbf{83.68$\pm$6.77}} & \multicolumn{1}{l|}{\textbf{85.26$\pm$5.98}} & \textbf{83.42$\pm$8.14} \\ \hline
\multicolumn{1}{l|}{CSF -w low-pass attribute} & \multicolumn{1}{l|}{\underline{63.95$\pm$8.42}} & \multicolumn{1}{l|}{\underline{62.37$\pm$8.78}} & \multicolumn{1}{l|}{\underline{63.16$\pm$8.5}} & \underline{63.95$\pm$7.85} \\ \hline
\multicolumn{1}{l|}{CSF -w only low-pass attribute} & \multicolumn{1}{l|}{60.53$\pm$9.12} & \multicolumn{1}{l|}{60.53$\pm$9.45} & \multicolumn{1}{l|}{59.21$\pm$8.62} & 60.26$\pm$8.36 \\ \midrule
\multicolumn{5}{c}{\textbf{Texas}} \\ \midrule
\multicolumn{1}{l|}{CSF} & \multicolumn{1}{l|}{\textbf{89.47$\pm$6.20}} & \multicolumn{1}{l|}{\textbf{88.16$\pm$7.15}} & \multicolumn{1}{l|}{\textbf{87.63$\pm$6.57}} & \textbf{87.63$\pm$7.02} \\ \hline
\multicolumn{1}{l|}{CSF -w low-pass attribute} & \multicolumn{1}{l|}{\underline{70.00$\pm$7.67}} & \multicolumn{1}{l|}{\underline{68.68$\pm$8.18}} & \multicolumn{1}{l|}{\underline{70.79$\pm$8.55}} & \underline{68.68$\pm$8.81} \\ \hline
\multicolumn{1}{l|}{CSF -w only low-pass attribute} & \multicolumn{1}{l|}{60.53$\pm$8.95} & \multicolumn{1}{l|}{58.95$\pm$7.96} & \multicolumn{1}{l|}{59.47$\pm$9.3} & 60.53$\pm$8.86 \\ \midrule
\multicolumn{5}{c}{\textbf{Wisc.}} \\ \midrule
\multicolumn{1}{l|}{CSF} & \multicolumn{1}{l|}{\textbf{89.22$\pm$6.08}} & \multicolumn{1}{l|}{\textbf{87.25$\pm$6.93}} & \multicolumn{1}{l|}{\textbf{86.47$\pm$7.65}} & \textbf{86.47$\pm$7.19} \\ \hline
\multicolumn{1}{l|}{CSF -w low-pass attribute} & \multicolumn{1}{l|}{\underline{64.71$\pm$8.42}} & \multicolumn{1}{l|}{\underline{63.73$\pm$8.93}} & \multicolumn{1}{l|}{\underline{63.73$\pm$8.88}} & \underline{63.73$\pm$7.96} \\ \hline
\multicolumn{1}{l|}{CSF -w only low-pass attribute} & \multicolumn{1}{l|}{57.84$\pm$8.78} & \multicolumn{1}{l|}{57.65$\pm$10.83} & \multicolumn{1}{l|}{62.75$\pm$9.96} & 67.84$\pm$7.96 \\ \midrule
\multicolumn{5}{c}{\textbf{Cham.}} \\ \midrule
\multicolumn{1}{l|}{CSF} & \multicolumn{1}{l|}{\textbf{71.27$\pm$2.95}} & \multicolumn{1}{l|}{\textbf{73.33$\pm$2.65}} & \multicolumn{1}{l|}{\textbf{73.05$\pm$2.54}} & \textbf{73.31$\pm$2.26} \\ \hline
\multicolumn{1}{l|}{CSF -w low-pass attribute} & \multicolumn{1}{l|}{\underline{58.82$\pm$1.75}} & \multicolumn{1}{l|}{\underline{56.78$\pm$1.84}} & \multicolumn{1}{l|}{\underline{57.87$\pm$2.47}} & \underline{57.57$\pm$2.42} \\ \hline
\multicolumn{1}{l|}{CSF -w only low-pass attribute} & \multicolumn{1}{l|}{28.31$\pm$3.07} & \multicolumn{1}{l|}{29.34$\pm$2.29} & \multicolumn{1}{l|}{29.93$\pm$2.5} & 29.98$\pm$2.52 \\ \midrule
\multicolumn{5}{c}{\textbf{Squi.}} \\ \midrule
\multicolumn{1}{l|}{CSF} & \multicolumn{1}{l|}{\textbf{57.03$\pm$1.04}} & \multicolumn{1}{l|}{\textbf{60.64$\pm$1.38}} & \multicolumn{1}{l|}{\textbf{61.04$\pm$1.17}} & \textbf{60.58$\pm$1.17} \\ \hline
\multicolumn{1}{l|}{CSF -w low-pass attribute} & \multicolumn{1}{l|}{\underline{36.12$\pm$1.07}} & \multicolumn{1}{l|}{\underline{34.74$\pm$0.97}} & \multicolumn{1}{l|}{\underline{34.84$\pm$1.19}} & \underline{34.76$\pm$1.5} \\ \hline
\multicolumn{1}{l|}{CSF -w only low-pass attribute} & \multicolumn{1}{l|}{20.46$\pm$2.31} & \multicolumn{1}{l|}{22.83$\pm$1.48} & \multicolumn{1}{l|}{21.07$\pm$2.53} & 23.44$\pm$1.76 \\ \bottomrule
\end{tabular}%
}
\end{table}

\section{Appendix for Related work}
\textbf{Connection to Shrinkage Estimator}. In statistics, a shrinkage estimator is an estimator that incorporates the effects of shrinkage according to the extra information \cite{fourdrinier2018shrinkage}, such as the kernel matrix \cite{muandet2014kernel}. Classic examples include the Lasso estimator for Lasso regression, the ridge estimator for ridge regression, and the spectral kernel mean shrinkage estimator \cite{muandet2014kernel,muandet2014kernel2} for kernel mean estimation. Here, our high-pass spectral filter is also a shrinkage estimator for semi-supervised kernel ridge regression, wherein the shrinkage strength is small in the coordinates with large eigenvalues.

\textbf{Graph Structure Learning.} 
It targets to simultaneously train an optimized graph topology and corresponding node embeddings for downstream tasks \cite{zhusurvey}. However, our CSF approach differs from graph structure learning in that we do not explicitly learn the structure of the graph. Instead, we focus on multiple kernel learning to integrate kernels from different spaces.

\section{Analysis on initialization of $K_{attr}$}
\label{robu}
Constructing a KNN-based graph is the most popular initialization method for the information propagation model \cite{pmlr-v32-fujiwara14}, CSF also uses it as the initialization of the attribute-based filter $K_{attr}$ (i.e., the Gaussian kernel $K$). In this section, we evaluate the sensitivity of CSF concerning the sparsity of the KNN-based graph. We vary the number of neighbors of KNN in $\{5,10,20,50\}$ and extract the high-pass filter as usual. The results, presented in Table \ref{tab:knn}, indicate that CSF is generally robust to the sparsity of KNN-based kernel initialization. However, we recommend using the top 20 neighbors to construct the KNN kernel in practice, as it leads to better overall performance than other options.

\begin{table*}
\caption{Robustness evaluation on KNN-based initialization. For simplicity, the experiments are conducted with ten layers.}
\label{tab:knn}
\centering
\resizebox{1\textwidth}{!}{
\begin{tabular}{l|ccc|cccccc}
\toprule
\multirow{2}{*}{\textbf{\#Neighbors}} & \multicolumn{3}{c|}{\textbf{Assortative}}                                                  & \multicolumn{6}{c}{\textbf{Disassortative}}                                                                                                                                                                 \\ \cline{2-10} 
                                & \multicolumn{1}{l|}{\textbf{Cora}}  & \multicolumn{1}{l|}{\textbf{Cite.}} & \textbf{Pubm.} & \multicolumn{1}{l|}{\textbf{Actor}} & \multicolumn{1}{l|}{\textbf{Corn.}} & \multicolumn{1}{l|}{\textbf{Texas}} & \multicolumn{1}{l|}{\textbf{Wisc.}} & \multicolumn{1}{l|}{\textbf{Cham.}} & \textbf{Squi.} \\ \midrule
Top 5     & {80.93$\pm$1.91}&{67.85$\pm$1.06}&{76.85$\pm$1.77}&{50.02$\pm$1.74}&{85.22$\pm$4.87}&{87.65$\pm$6.68}&{87.11$\pm$7.23}&{72.86$\pm$2.08}& {60.56$\pm$1.33} \\
Top 10    & {80.67$\pm$1.09}&{68.03$\pm$0.89}&{76.61$\pm$1.84}&{49.14$\pm$1.57}&{82.19$\pm$7.03}&{87.62$\pm$6.98}&{86.49$\pm$7.14}&{73.46$\pm$2.22}& {61.23$\pm$1.08} \\
Top 20    & {80.80$\pm$1.89}&{68.33$\pm$0.85}&{76.72$\pm$1.41}&{49.34$\pm$1.48}&{85.26$\pm$5.98}&{87.63$\pm$6.57}&{86.47$\pm$7.65}&{73.05$\pm$2.54}& {61.04$\pm$1.17} \\
Top 50    & {79.77$\pm$1.45}&{68.63$\pm$0.80}&{76.71$\pm$1.38}&{49.55$\pm$1.44}&{82.23$\pm$6.88}&{87.67$\pm$6.55}&{87.18$\pm$7.12}&{72.77$\pm$2.51}& {60.15$\pm$0.95} \\
Full connect    & {74.76$\pm$2.49}&{69.23$\pm$0.65}&{76.84$\pm$1.33}&{49.13$\pm$1.40}&{80.16$\pm$7.77}&{87.60$\pm$6.53}&{87.47$\pm$8.08}&{73.95$\pm$2.14}& {60.04$\pm$1.04} \\ \bottomrule
\end{tabular}
}
\end{table*}

\begin{figure*}[t]
\centering
\includegraphics[width=1\textwidth]{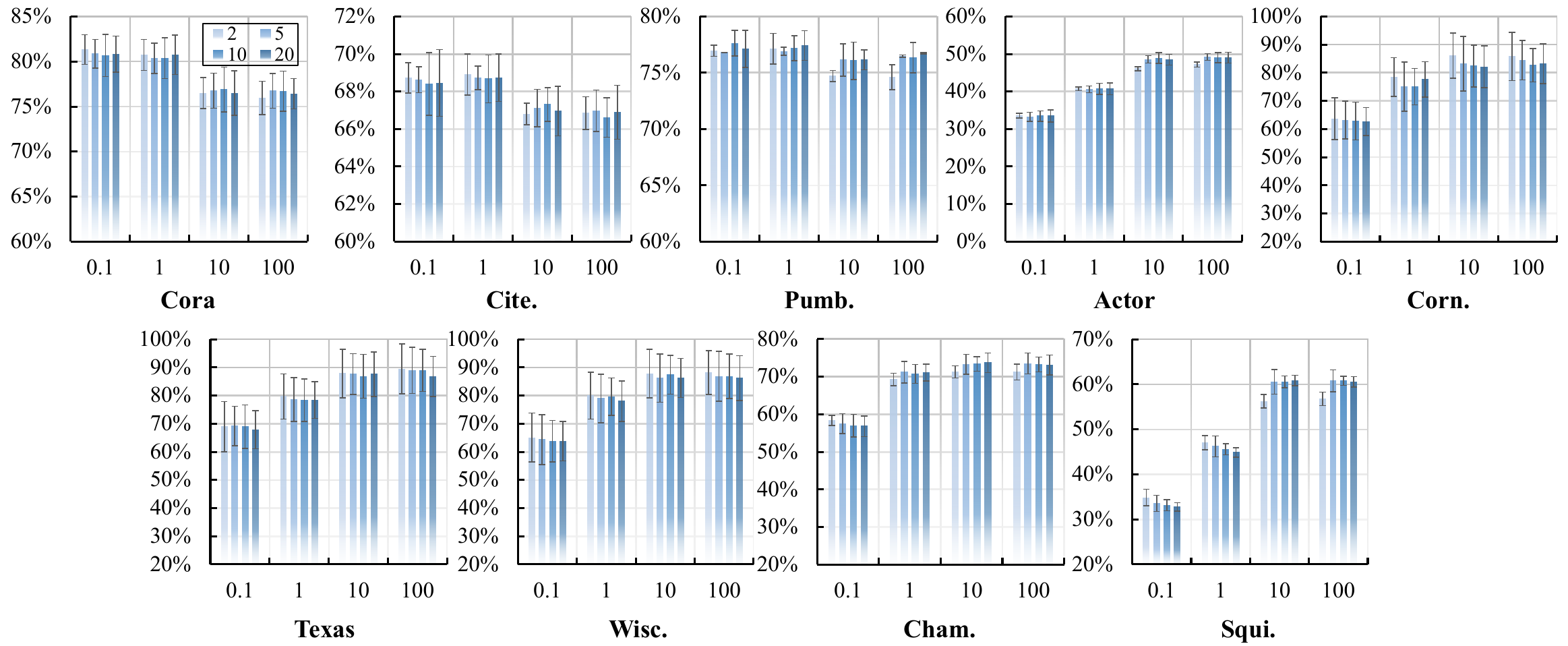}
\caption {Parameter analysis on $a_2$ on various number of layers ($\{2, 5, 10, 20\}$). Here $a_2$ is tuned in $\{0.1, 1, 10, 100\}$, and $a_3$ is fixed as $1$. We report the average performance (and its standard deviation) of CSF across different numbers of layers, where the black line represents the standard variance.}
\label{fig:a2}
\end{figure*}

\begin{figure*}[t]
\centering
\includegraphics[width=1\textwidth]{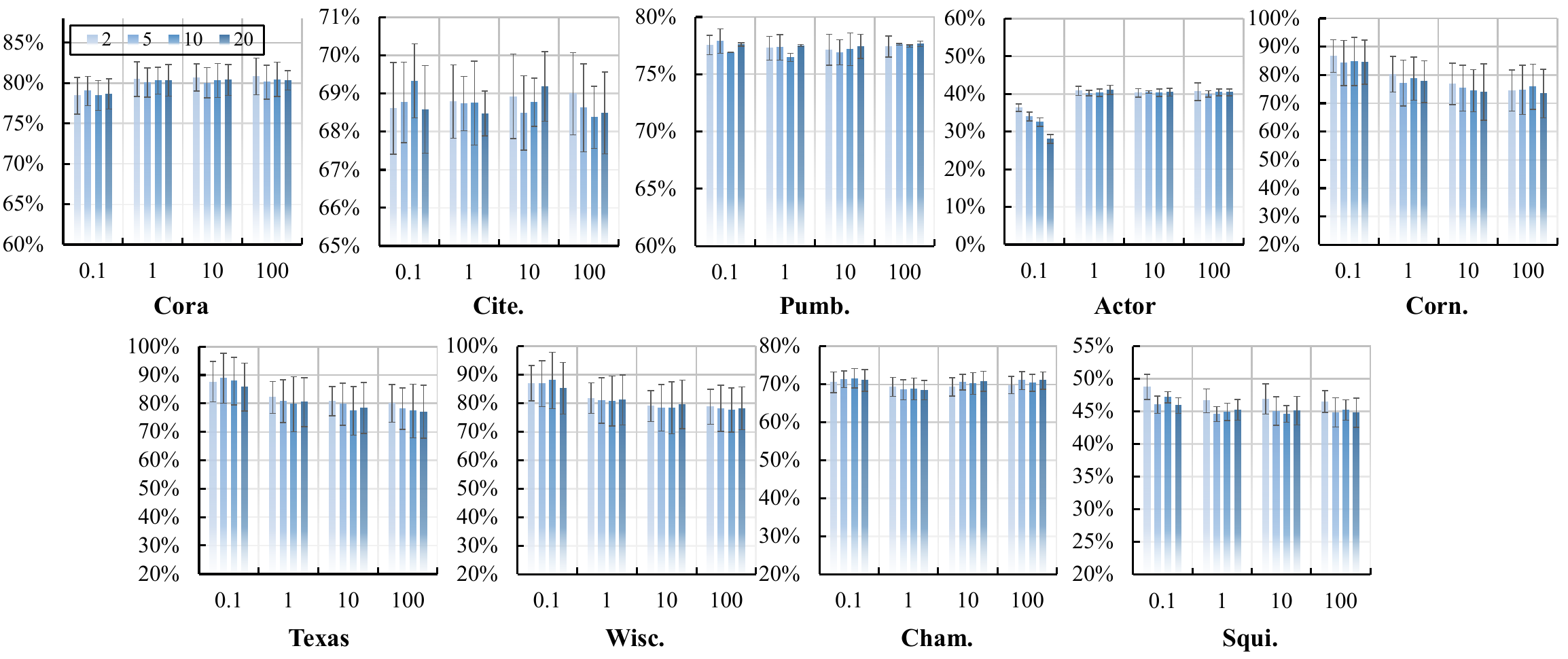}
\caption {Parameter analysis on $a_3$ on various number of layers ($\{2, 5, 10, 20\}$). Here $a_3$ is tuned in $\{0.1, 1, 10, 100\}$, and $a_2$ is fixed as $1$. We report the average performance (and its standard deviation) of CSF across different numbers of layers, where the black line represents the standard variance.}
\label{fig:a3}
\end{figure*}

\section{Hyper-parameter Analysis}
\label{paraan}
As discussed in Remark \ref{ermark23} in Section \ref{sec:high-pass}, two hyper-parameters, The kernel matrix $K_{attr}$ adjusts the shrinkage effect on the low-frequency signals in the attribute-based graph via $a_2$ and $a_3$. In particular, $a_2$ controls the shrinkage strength, which is used to compress the scale of node attributes/representations, while $a_3$ controls the frequency range of the low-pass signals that need to be shrunk. In this section, we experimentally analyze the sensitivity of CSF to these two hyperparameters. Fig. \ref{fig:a2} shows the experimental results of parameter tuning on $a_2$, while Fig. \ref{fig:a3} shows the results of tuning $a_3$. The detailed observations are as follows.

\textbf{Analysis on $a_2$}. According to Fig. \ref{fig:a2}, assortative graphs favor small $a_2$, while disassortative graphs favor large $a_2$. This finding is consistent with our analysis. $a_2$ controls the shrinkage strength, which is used to compress the scale of node attributes/representations. The smaller $a_2$, the stronger the compression ability. The information about node attributes would be lost when $a_2$ is very small. In this case, $a_2$ controls the importance of node attributes in downstream tasks. As for disassortative graphs, taking a larger value of $a_2$ helps to leverage the information of node attributes. Arbitrarily setting $a_2=1$ is a safe choice, but we suggest using the NetworkX package ( and partially labeled data) to check if a given graph is disassortative.

\textbf{Analysis on $a_3$}. According to Fig. \ref{fig:a3}, CSF is more robust to the value of $a_3$ compared to $a_2$, as $a_3$ only controls the frequency range of the low-pass signals that need to be shrunk. However, $a_3$ also controls the complexity of the semi-supervised KRR model. Thus, $a_3$ should not be too large or too small. This may explain why the performance of CSF decreases on the Actor dataset. In this paper, we suggest setting $a_3=1$.

\section{Improving the computation efficiency of kernel inversion}
\label{asdfaiwnaerfoqw34o22}
Our proposed approach effectively addresses the over-smoothing problem of GCN and performs well compared to various baselines on different datasets. Here, we discuss the potential limitations of computational complexity. Unlike GAT and GCN-BC, which suffer from memory overflow due to their high space complexity, our method requires high time complexity to obtain the cross-space filter. Normally, it requires $O(N^3)$ due to kernel multiplication and inversion. Fortunately, 1) we only calculate the kernel once (see Algorithm \ref{alg:algorithm}), 2) and classic methods, such as the low-rank approximation \cite{he2018kernel}, random Fourier feature \cite{rahimi2007random, cutajar2017random} and Nystrom method \cite{williams2000using, lu2016large} can help ease the calculation on large graphs.

To demonstrate this, we tentatively provide a proof-of-concept experiment on Nystrom-based CSF. This reduces the complexity to $O(mN^2)$, where $m \ll N$, which is comparable to vanilla GCN's complexity of $O(N^2)$. Specifically, we use the Nystrom method \cite{williams2000using, lu2016large} to approximate the inverse of the kernel and calculate the proposed high-pass spectral filter $K_{attr}$. Note that other kernel inverse approximation methods could also be applied.
Given a kernel $K \in R^{N\times N}$, the Nystrom method first randomly samples $m  \ll  N$ columns to form a matrix $C \in R^{N \times m}$. Then, it builds a much smaller kernel matrix $Q \in R^{m \times m}$ based on the matrix $C$. As a result, the original kernel $K$ could be approximated by
\begin{equation}
    K  \approx CQ_{k}^{-1}C^T,
\end{equation}
where $Q_{k}$ is the best rank-k approximation of $Q$, and $Q^{-1}$ is the (pseudo) inverse of $Q$. In our case, we approximate the inverse of $K$ by 
\begin{equation}
    K^{-1}  \approx C_1^TQ_{k}C_1,
\end{equation}
where $C_1$ is the pseudo inverse of $C$. The computational complexity for $K^{-1}$ is reduced from $O(N^3)$ to $O(mN^2)$, where $m  \ll  N$.

We test our method on the two largest assortative and dis-assortative graphs in our experiments (i.e., Pubmed and Actor). To push the limit of the Nystrom method for kernel inverse approximation, we set the sample size $m$ and rank-k to be 0.1\% of the original data for simplicity. 
As shown in Table \ref{tab:main_resultsddddw2343}, we also report the computation time for calculating the cross-space filter, in addition to the model accuracy. The results show that the Nystrom-based CSF substantially increases calculation efficiency while maintaining some level of model effectiveness. 

Lastly, we would like to emphasize that the kernel and inverse kernel have good characteristics and are widely used in various applications such as deblurring images \cite{son2021single} and interpreting deep neural networks \cite{huang2020deep}. By using the kernel method, we can interpret our high-pass filter and improve the effectiveness of GCN in addressing the over-smoothing problem.

\begin{table}
\caption{CSF with Nystrom approximation. '-' means vanilla CSF. $m=\text{M} * \text{\#Nodes}$. For simplicity, the experiments are conducted with ten layers.}
\label{tab:main_resultsddddw2343}
\centering
\resizebox{0.45\textwidth}{!}{
\begin{tabular}{c|c|c|c|c}
\toprule
\multicolumn{1}{c|}{\multirow{2}{*}{\textbf{\begin{tabular}[c]{@{}c@{}}M\\ (\%)\end{tabular}}}} & \multicolumn{2}{c|}{\textbf{Pubm.} (\#Nodes=19717)}                          & \multicolumn{2}{c}{\textbf{Actor} (\#Node=7600)}                           \\ \cline{2-5} 
\multicolumn{1}{c|}{}                                                                           & \multicolumn{1}{c|}{\textbf{Acc.}} & \multicolumn{1}{c|}{\textbf{Time (s)}} & \multicolumn{1}{c|}{\textbf{Acc.}} & \multicolumn{1}{c}{\textbf{Time (s)}} \\ \midrule
0.1   & 76.53$\pm$1.46 & 2.25 & 48.84$\pm$1.32 & 1.76\\
  --  & 76.72$\pm$1.41  & 5.16 &  49.34$\pm$1.48  &  1.99\\  \bottomrule
\end{tabular}
}
\end{table}

\section{Appendix for over-smoothing problem evaluation}
As discussed in Section \ref{d73nd}, we design experiments to evaluate the performances of all methods on various data sets, together with the number of layers ranging from $\{2, 5, 10, 20\}$. The overall results are provided in Table \ref{fig:ideasss2} and the average version is provided in Table \ref{tab:main_results}. Note that we omit the results of GAT and GCN-BC if they overflow the run-time memory. 
Basically, most methods suffer from decreased performance, while our method and some adaptive filter-based methods are more robust to the over-smoothing problem. As the number of model layers increases, the performance of the FAGCN and AKGNN methods is more stable than that of PGNN and AdaGNN. However, due to the lack of node attribute support, the overall performance of FAGCN and AKGNN is still not as good as CSF.

\begin{table*}
\caption{Over-smoothing problem evaluation of all methods. 'OOM' means 'out-of-memory'. GPRGNN \cite{chien2020adaptive} and ChebNetII \cite{he2022convolutional} are tailored for disassortative graghs.}
\label{fig:ideasss2}
\centering
\resizebox{1\textwidth}{!}{
\begin{tabular}{l|l|ccccccccc}
\toprule
\textbf{Model}     & \textbf{\#Layer} & \textbf{Cora}  & \textbf{Cite.} & \textbf{Pubm.} & \textbf{Actor} & \textbf{Corn.} & \textbf{Texas} & \textbf{Wisc.}  & \textbf{Cham.} & \textbf{Squi.} \\ 
\midrule

MLP       & 2       & 53.01$\pm$1.70&51.98$\pm$2.48&66.71$\pm$1.70&45.76$\pm$0.76&{83.42$\pm$6.45}&84.21$\pm$7.94&{85.29$\pm$6.62}&61.21$\pm$2.09&49.74$\pm$1.66\\
GCN       & 2       & 80.03$\pm$2.02&67.79$\pm$1.22&75.01$\pm$1.88&33.99$\pm$1.03&62.37$\pm$8.78&69.74$\pm$8.16&65.88$\pm$8.63&62.46$\pm$2.32&34.97$\pm$2.32\\
SGC       & 2       & 79.59$\pm$1.63&67.95$\pm$0.86&75.06$\pm$2.06&34.47$\pm$0.83&65.00$\pm$9.20&70.26$\pm$9.77&67.06$\pm$8.31&63.93$\pm$2.09&37.66$\pm$2.01\\
GAT       & 2       & 63.68$\pm$1.61&58.65$\pm$2.37&74.84$\pm$2.01&30.91$\pm$0.98&63.68$\pm$8.49&67.37$\pm$7.15&60.20$\pm$8.57&60.86$\pm$2.01&37.66$\pm$1.77
 \\
GraphSAGE & 2  &    45.51$\pm$3.91&42.34$\pm$6.30&60.12$\pm$5.02&30.18$\pm$0.83&59.21$\pm$8.44&61.58$\pm$10.47&57.84$\pm$9.88&54.96$\pm$2.97&33.10$\pm$2.89\\ \hline

GPRGNN & 2 & 81.69$\pm$1.92&67.54$\pm$0.94&75.96$\pm$2.00&39.87$\pm$0.79&70.53$\pm$8.02&81.05$\pm$6.42&78.24$\pm$9.58&66.91$\pm$2.11&46.34$\pm$1.23\\ 

ChebNetII & 2 & 79.77$\pm$1.53&63.77$\pm$2.52&72.88$\pm$3.57&40.16$\pm$1.08&71.58$\pm$6.54&78.95$\pm$7.44&75.69$\pm$10.26&73.79$\pm$1.96&57.41$\pm$1.19\\ \hline

APPNP     & 2       & \underline{80.62$\pm$1.97} & 68.82$\pm$0.69 & \underline{75.61$\pm$2.35} & 29.93$\pm$0.87 & 61.05$\pm$7.83 & 65.26$\pm$8.58 & 59.61$\pm$7.80 & 53.00$\pm$2.56 & 21.65$\pm$0.49    \\
JKNET     & 2       & 77.43$\pm$1.69&65.84$\pm$1.73&74.92$\pm$1.94&28.18$\pm$2.16&60.79$\pm$7.59&63.95$\pm$8.33&61.57$\pm$7.46&56.58$\pm$2.72&33.32$\pm$1.37 \\
GCN-BC    & 2       & 52.49$\pm$1.53&48.36$\pm$2.18&65.72$\pm$1.50&37.06$\pm$1.16&\underline{84.74$\pm$6.18}&\underline{86.05$\pm$8.04}&\underline{89.02$\pm$4.45}&59.04$\pm$1.83&42.62$\pm$1.37\\ \hline
FAGCN     & 2       & 78.19$\pm$1.81 & 65.15$\pm$2.72 & 74.53$\pm$1.91 & 39.51$\pm$1.12 & 72.37$\pm$7.77 & 78.68$\pm$8.45 & 78.04$\pm$8.46 & 68.51$\pm$2.27 & 44.79$\pm$2.27 \\
PGNN      & 2       & 72.15$\pm$2.96&47.55$\pm$2.36&64.56$\pm$2.28&39.74$\pm$0.90&61.05$\pm$6.42&65.26$\pm$7.73&57.06$\pm$6.37&60.81$\pm$1.58&44.08$\pm$1.99 \\
AdaGNN    & 2       & 73.23$\pm$1.97&63.43$\pm$2.39&69.77$\pm$4.46&\textbf{48.38$\pm$1.73}&80.26$\pm$6.93&{84.21$\pm$5.68}&84.12$\pm$5.43&\textbf{73.51$\pm$2.52}&\textbf{57.68$\pm$1.33} \\
AKGNN     & 2       & 72.35$\pm$4.19&64.48$\pm$2.36&71.05$\pm$4.66&34.98$\pm$1.09&59.47$\pm$7.67&60.79$\pm$7.28&61.76$\pm$7.23&61.75$\pm$1.95&36.60$\pm$1.45 \\
\rowcolor{green!40} CSF   & 2       & \textbf{81.45$\pm$1.61}&\textbf{68.94$\pm$0.84}&\textbf{76.41$\pm$1.70}&\underline{47.26$\pm$1.85}&\textbf{86.58$\pm$5.03}&\textbf{89.47$\pm$6.20}&\textbf{89.22$\pm$6.08}&\underline{71.27$\pm$2.95}& \underline{57.03$\pm$1.04}\\
CSF -w/o topology & 2 & 33.60$\pm$1.39&27.22$\pm$1.67&55.39$\pm$2.91&46.89$\pm$0.80&86.84$\pm$7.13&86.58$\pm$6.26&88.63$\pm$6.39&62.94$\pm$2.30&51.05$\pm$1.75\\
CSF -w low-pass attribute & 2 & 81.43$\pm$1.63&68.84$\pm$0.81&77.40$\pm$1.86&33.61$\pm$0.65&63.95$\pm$8.42&70.00$\pm$7.67&64.71$\pm$8.42&58.82$\pm$1.75&36.12$\pm$1.07\\
CSF -w only low-pass attribute & 2 & 31.33$\pm$0.79 & 21.41$\pm$1.08 &50.18$\pm$0.83&28.25$\pm$0.77&60.53$\pm$9.12&60.53$\pm$8.95&57.84$\pm$8.78&28.31$\pm$3.07& 20.46$\pm$2.31\\
\midrule

MLP       & 5       & 34.22$\pm$1.93&28.45$\pm$3.10&56.37$\pm$1.91&\underline{41.00$\pm$1.65}&71.05$\pm$8.41&71.05$\pm$7.85&72.75$\pm$5.73&41.78$\pm$5.11&27.29$\pm$6.68 \\
GCN       & 5       & 52.78$\pm$5.52&50.43$\pm$4.50&67.68$\pm$3.74&25.70$\pm$1.24&58.68$\pm$8.14&58.68$\pm$8.14&48.04$\pm$4.74&32.61$\pm$10.88&20.12$\pm$1.10 \\
SGC       & 5       & 78.44$\pm$1.65&62.70$\pm$2.44&74.62$\pm$1.76&28.18$\pm$1.26&60.53$\pm$8.59&62.63$\pm$8.30&54.71$\pm$7.13&52.70$\pm$2.65&30.02$\pm$2.51\\ 
GAT       & 5       & 64.11$\pm$1.91&58.83$\pm$3.99&OOM&30.67$\pm$1.14&62.63$\pm$9.18&66.58$\pm$7.95&61.37$\pm$8.72&60.83$\pm$2.38&37.40$\pm$1.91 \\
GraphSAGE & 5       &45.04$\pm$2.84&41.49$\pm$6.62&61.07$\pm$3.90&30.17$\pm$0.64&59.21$\pm$8.34&61.84$\pm$7.67&56.67$\pm$8.64&55.29$\pm$2.98&32.13$\pm$2.75 \\ \hline

GPRGNN & 5 & 79.98$\pm$2.22&66.14$\pm$1.39&76.26$\pm$1.94&26.26$\pm$1.02&58.68$\pm$8.14&59.21$\pm$8.53&52.75$\pm$7.42&61.73$\pm$2.01&24.71$\pm$2.42\\ 

ChebNetII & 5 & 77.14$\pm$2.45&61.43$\pm$2.42&73.61$\pm$3.19&40.76$\pm$1.04&73.42$\pm$8.36&81.32$\pm$7.07&75.29$\pm$6.87&74.32$\pm$1.87&56.19$\pm$1.67\\ \hline

APPNP     & 5       & 80.62$\pm$1.97 & \underline{68.34$\pm$1.11} & \underline{76.33$\pm$2.14} & 26.32$\pm$0.78 & 58.95$\pm$8.34 & 58.95$\pm$8.61 & 53.14$\pm$8.03 & 37.32$\pm$2.65 & 21.03$\pm$0.99 \\
JKNET     & 5       & 78.00$\pm$2.36&65.98$\pm$1.53&74.93$\pm$1.95&26.34$\pm$1.12&58.95$\pm$8.25&60.26$\pm$7.99&57.45$\pm$5.39&46.73$\pm$4.00&30.67$\pm$2.02 \\
GCN-BC    & 5       & 34.16$\pm$1.36&26.27$\pm$2.40&54.30$\pm$1.71&31.67$\pm$1.46&62.89$\pm$5.75&69.74$\pm$7.15&67.65$\pm$9.88&29.98$\pm$1.56&25.70$\pm$1.16\\ \hline
FAGCN     & 5       & \underline{80.70$\pm$2.01} & 66.10$\pm$1.56 & 75.52$\pm$2.93 & 39.82$\pm$1.27 & \underline{73.95$\pm$7.69} & \underline{78.95$\pm$8.32} & \underline{78.82$\pm$9.00} & 63.97$\pm$2.53 & 42.83$\pm$2.45 \\
PGNN      & 5       & 71.05$\pm$3.10&48.80$\pm$2.21&67.89$\pm$2.31&26.45$\pm$0.98&60.00$\pm$8.02&64.21$\pm$7.67&56.86$\pm$6.67&43.05$\pm$2.92&32.71$\pm$1.59 \\
AdaGNN    & 5       & 41.06$\pm$1.86&31.99$\pm$0.90&51.50$\pm$2.35&34.36$\pm$1.56&60.53$\pm$8.68&75.26$\pm$10.17&68.43$\pm$7.98&\underline{65.90$\pm$2.83}&\underline{50.45$\pm$1.41}\\
AKGNN     & 5       & 77.93$\pm$1.93&66.08$\pm$3.07&74.87$\pm$1.87&35.28$\pm$0.80&59.74$\pm$8.51&62.11$\pm$9.05&60.20$\pm$8.27&61.45$\pm$2.39&37.74$\pm$2.07 \\
\rowcolor{green!40} CSF   & 5       & \textbf{80.88$\pm$1.85}&\textbf{68.61$\pm$1.31}&\textbf{76.98$\pm$1.54}&\textbf{49.27$\pm$1.46}&\textbf{83.68$\pm$6.77}&\textbf{88.16$\pm$7.15}&\textbf{87.25$\pm$6.93}&\textbf{73.33$\pm$2.65}& \textbf{60.64$\pm$1.38} \\ 
CSF -w/o topology & 5 & 34.09$\pm$1.15&26.52$\pm$2.24&55.14$\pm$4.24&49.54$\pm$0.57&86.58$\pm$6.96&86.58$\pm$6.50&87.25$\pm$6.75&63.51$\pm$1.85&54.49$\pm$1.27\\
CSF -w low-pass attribute & 5 & 80.76$\pm$1.64&68.28$\pm$1.00&77.64$\pm$1.21&34.23$\pm$0.82&62.37$\pm$8.78&68.68$\pm$8.18&63.73$\pm$8.93&56.78$\pm$1.84&34.74$\pm$0.97\\
CSF -w only low-pass attribute & 5  & 31.37$\pm$0.82 & 21.43$\pm$1.24 &49.88$\pm$0.77&28.28$\pm$0.75&60.53$\pm$9.45&58.95$\pm$7.96&57.65$\pm$10.83&29.34$\pm$2.29& 22.83$\pm$1.48\\
\midrule

MLP       & 10      & 31.30$\pm$0.76&20.83$\pm$0.78&39.88$\pm$0.25&25.70$\pm$1.24&58.68$\pm$8.14&58.68$\pm$8.14&46.47$\pm$6.06&21.51$\pm$0.84&20.35$\pm$1.05 \\
GCN       & 10      & 31.30$\pm$0.76&21.53$\pm$2.02&40.49$\pm$1.92&25.41$\pm$1.37&58.68$\pm$8.14&58.68$\pm$8.14&46.47$\pm$6.06&21.45$\pm$0.89&20.17$\pm$1.04 \\
SGC       & 10      & 65.42$\pm$4.71&48.22$\pm$4.01&72.76$\pm$2.50&25.92$\pm$1.37&59.47$\pm$8.25&60.53$\pm$8.59&51.18$\pm$6.57&35.11$\pm$3.33&22.91$\pm$1.15 \\
GAT       & 10      & 64.07$\pm$1.86&58.62$\pm$2.99&OOM&30.32$\pm$0.79&63.95$\pm$10.23&66.84$\pm$9.70&59.61$\pm$7.63&60.99$\pm$1.96&37.97$\pm$2.16 \\
GraphSAGE & 10      & 44.89$\pm$2.99&41.09$\pm$6.57&59.96$\pm$4.32&30.14$\pm$0.66&59.74$\pm$8.04&62.37$\pm$8.04&57.06$\pm$7.98&55.24$\pm$2.47&32.49$\pm$2.07 \\ \hline

GPRGNN & 10 & 71.60$\pm$12.76&64.69$\pm$1.78&75.24$\pm$2.60&26.20$\pm$0.79&58.68$\pm$8.14&58.68$\pm$8.14&47.45$\pm$4.69&30.37$\pm$4.06&26.29$\pm$1.90\\ 

ChebNetII & 10 & 76.44$\pm$1.73&59.14$\pm$3.83&71.81$\pm$4.69&40.64$\pm$0.59&68.16$\pm$6.85&79.47$\pm$7.92&74.51$\pm$7.10&74.39$\pm$1.92&55.48$\pm$2.65\\ \hline

APPNP     & 10      & 78.21$\pm$1.46 & \underline{68.02$\pm$1.12} & \underline{76.24$\pm$2.37} & 25.89$\pm$1.17 & 58.68$\pm$8.14 & 58.68$\pm$8.14 & 53.53$\pm$7.16 & 33.53$\pm$2.92 & 20.74$\pm$1.08    \\
JKNET     & 10      & 76.49$\pm$2.23&65.44$\pm$1.77&75.04$\pm$2.18&26.22$\pm$1.25&58.95$\pm$7.67&60.00$\pm$8.93&57.25$\pm$5.61&37.19$\pm$2.35&29.07$\pm$1.38 \\
GCN-BC    & 10      & 31.89$\pm$0.81&23.82$\pm$1.40&20.52$\pm$0.16&29.90$\pm$1.19&64.47$\pm$8.25&70.00$\pm$8.88&69.61$\pm$7.52&31.40$\pm$2.75&26.82$\pm$0.92 \\ \hline
FAGCN     & 10      & \textbf{82.23$\pm$1.99} & 66.90$\pm$1.84 & 76.11$\pm$2.10 & \underline{39.81$\pm$1.45} & \underline{76.32$\pm$9.03} & \underline{78.16$\pm$8.60} & \underline{78.63$\pm$8.29} & \underline{63.11$\pm$1.51} & \underline{43.26$\pm$1.46}  \\
PGNN      & 10      &59.86$\pm$4.75&45.79$\pm$2.81&67.27$\pm$3.15&26.41$\pm$0.90&58.95$\pm$8.61&60.00$\pm$8.40&54.31$\pm$5.85&34.52$\pm$4.49&30.31$\pm$1.73 \\
AdaGNN    & 10      & 32.35$\pm$1.31&22.66$\pm$2.27&49.79$\pm$4.85&26.74$\pm$1.32&60.26$\pm$6.73&64.21$\pm$9.94&63.53$\pm$7.91&39.89$\pm$3.67&24.92$\pm$2.72\\
AKGNN     & 10      & 80.22$\pm$0.98&66.70$\pm$2.09&75.07$\pm$2.64&35.22$\pm$0.88&58.95$\pm$7.96&61.58$\pm$8.25&56.08$\pm$8.02&61.58$\pm$2.02&37.51$\pm$2.35 \\
\rowcolor{green!40} CSF   & 10      & \underline{80.80$\pm$1.89}&\textbf{68.33$\pm$0.85}&\textbf{76.72$\pm$1.41}&\textbf{49.34$\pm$1.48}&\textbf{85.26$\pm$5.98}&\textbf{87.63$\pm$6.57}&\textbf{86.47$\pm$7.65}&\textbf{73.05$\pm$2.54}& \textbf{61.04$\pm$1.17} \\ 
CSF -w/o topology & 10 & 33.88$\pm$1.01&27.51$\pm$1.74&55.90$\pm$2.36&49.38$\pm$0.66&86.58$\pm$6.26&86.84$\pm$6.56&87.84$\pm$6.46&63.29$\pm$2.08&54.06$\pm$1.52 \\
CSF -w low-pass attribute & 10 & 80.61$\pm$1.95&68.32$\pm$1.07&77.41$\pm$1.07&34.32$\pm$0.92&63.16$\pm$8.50&70.79$\pm$8.55&63.73$\pm$8.88&57.87$\pm$2.47&34.84$\pm$1.19\\
CSF -w only low-pass attribute & 10  & 31.27$\pm$0.82 & 21.12$\pm$0.89&51.10$\pm$0.89&28.18$\pm$0.78&59.21$\pm$8.62&59.47$\pm$9.30&62.75$\pm$9.96&29.93$\pm$2.50& 21.07$\pm$2.53\\
\midrule

MLP       & 20      & 31.30$\pm$0.76&20.89$\pm$0.73&39.88$\pm$0.25&25.70$\pm$1.24&58.68$\pm$8.14&58.68$\pm$8.14&46.47$\pm$6.06&21.51$\pm$0.84&20.26$\pm$1.04 \\
GCN       & 20      & 31.30$\pm$0.76&20.83$\pm$0.78&39.88$\pm$0.25&25.41$\pm$1.37&58.68$\pm$8.14&58.68$\pm$8.14&46.47$\pm$6.06&21.51$\pm$0.84&20.32$\pm$1.03 \\
SGC       & 20      & 38.77$\pm$2.77&32.18$\pm$2.60&61.98$\pm$4.92&25.46$\pm$1.34&59.74$\pm$7.55&58.68$\pm$8.14&47.65$\pm$7.28&31.67$\pm$2.72&21.82$\pm$0.99 \\ 
GAT       & 20      & 64.16$\pm$2.60&59.41$\pm$8.27&OOM&30.41$\pm$1.12&63.42$\pm$7.79&65.79$\pm$8.23&59.41$\pm$8.27& OOM & OOM \\
GraphSAGE & 20      & 45.59$\pm$3.05&42.96$\pm$5.76&60.21$\pm$5.22&30.26$\pm$0.77&59.47$\pm$8.43&61.32$\pm$9.53&56.08$\pm$7.35&55.07$\pm$2.85&32.19$\pm$2.64 \\ \hline

GPRGNN & 20 & 38.20$\pm$9.54&43.61$\pm$12.12&59.68$\pm$11.04&26.20$\pm$0.91&58.68$\pm$8.14&58.68$\pm$8.14&46.86$\pm$5.95&27.37$\pm$2.37&25.54$\pm$2.11\\ 

ChebNetII & 20 & 74.22$\pm$4.68&48.32$\pm$12.45&67.69$\pm$3.91&39.51$\pm$1.04&63.95$\pm$8.23&77.63$\pm$6.23&73.73$\pm$7.23&73.27$\pm$2.06&53.73$\pm$4.12\\ \hline

APPNP     & 20      & 74.02$\pm$1.82 & \underline{67.46$\pm$1.55} & 73.43$\pm$2.35 & 26.01$\pm$1.11 & 58.68$\pm$8.14 & 58.68$\pm$8.14 & 46.47$\pm$6.06 & 32.57$\pm$2.37 & 20.61$\pm$1.24    \\
JKNET     & 20      & 73.84$\pm$2.25&64.63$\pm$2.23&74.30$\pm$2.07&26.24$\pm$0.93&58.68$\pm$8.14&59.74$\pm$8.69&50.78$\pm$6.63&35.24$\pm$2.95&28.19$\pm$1.35 \\
GCN-BC    & 20      & 10.65$\pm$0.59&7.44$\pm$0.50&20.52$\pm$0.16&10.90$\pm$0.75&14.21$\pm$4.67&14.21$\pm$4.67&2.75$\pm$2.11&21.21$\pm$1.66&20.37$\pm$1.11 \\ \hline
FAGCN     & 20      & \textbf{82.41$\pm$1.48} & 67.25$\pm$1.14 & \underline{75.88$\pm$2.23} & \underline{39.49$\pm$1.23} & \underline{72.63$\pm$7.57} & \underline{78.68$\pm$8.45} & \underline{79.61$\pm$9.25}  & \underline{64.06$\pm$2.30} & \underline{42.77$\pm$1.77} \\
PGNN      & 20      & 37.80$\pm$6.88&29.77$\pm$4.92&45.48$\pm$7.22&26.45$\pm$1.01&58.95$\pm$7.67&60.26$\pm$9.24&52.16$\pm$5.33&28.49$\pm$2.70&29.98$\pm$1.09 \\
AdaGNN    & 20      & 31.47$\pm$0.70&21.12$\pm$0.71&42.73$\pm$2.84&27.12$\pm$1.19&58.68$\pm$8.14&58.68$\pm$8.14&49.41$\pm$9.09&30.04$\pm$2.38&22.82$\pm$1.05\\
AKGNN     & 20      & 80.74$\pm$2.28&66.08$\pm$1.40&75.41$\pm$2.42&35.26$\pm$0.78&58.68$\pm$8.14&62.89$\pm$8.81&57.65$\pm$6.93&60.35$\pm$2.18&37.38$\pm$1.23\\

\rowcolor{green!40} CSF   & 20      & \underline{81.21$\pm$1.75}&\textbf{68.26$\pm$1.08}&\textbf{77.06$\pm$1.35}&\textbf{48.86$\pm$1.66}&\textbf{83.42$\pm$8.14}&\textbf{87.63$\pm$7.02}&\textbf{86.47$\pm$7.19}&\textbf{73.31$\pm$2.26}& \textbf{60.58$\pm$1.17}\\
CSF -w/o topology & 20 & 33.50$\pm$1.27&27.16$\pm$2.06&55.47$\pm$3.30&49.49$\pm$0.85&86.58$\pm$6.73&86.32$\pm$6.88&87.06$\pm$6.61&63.38$\pm$1.73&54.15$\pm$1.30\\
CSF -w low-pass attribute & 20 & 80.68$\pm$1.82&68.07$\pm$1.31&77.52$\pm$2.07&34.17$\pm$0.75&63.95$\pm$7.85&68.68$\pm$8.81&63.73$\pm$7.96&57.57$\pm$2.42&34.76$\pm$1.50\\
CSF -w only low-pass attribute & 20  & 31.27$\pm$0.82 & 21.15$\pm$0.67&51.56$\pm$0.57&28.19$\pm$0.64&60.26$\pm$8.36&60.53$\pm$8.86&67.84$\pm$7.96&29.98$\pm$2.52& 23.44$\pm$1.76\\
\bottomrule
\end{tabular}
}
\end{table*}

\end{document}